\documentclass[10pt,twocolumn,letterpaper]{article}
\usepackage{iccv}
\usepackage{epsfig}
\usepackage{graphicx}
\usepackage{times,helvet,courier}
\usepackage[tight,footnotesize]{subfigure}
\usepackage{amsmath,amssymb,amsthm,bm}
\usepackage{multirow,makecell,footmisc,url}
\usepackage[ruled]{algorithm2e}
\usepackage{color}

\newtheorem{lemma}{Lemma}
\newtheorem{theorem}{Theorem}
\newtheorem{remark}{Case}
\setlength{\pdfpagewidth}{8.5in}
\setlength{\pdfpageheight}{11in}
\usepackage[pagebackref=true,breaklinks=true,letterpaper=true,colorlinks,bookmarks=false]{hyperref}

\iccvfinalcopy

\ificcvfinal\pagestyle{empty}\fi

\begin{document}

\title{HashNet: Deep Learning to Hash by Continuation\thanks{Corresponding author: M. Long (mingsheng@tsinghua.edu.cn).}}

\author{
    Zhangjie Cao$^\dag$, Mingsheng Long$^\dag$, Jianmin Wang$^\dag$, and Philip S. Yu$^{\dag\ddag}$\\
    $^\dag$KLiss, MOE; NEL-BDS; TNList; School of Software, Tsinghua University, China\\
    $^\ddag$University of Illinois at Chicago, IL, USA\\
    {\tt\small caozhangjie14@gmail.com\quad \{mingsheng,jimwang\}@tsinghua.edu.cn\quad psyu@uic.edu}
}

\maketitle

\begin{abstract}
Learning to hash has been widely applied to approximate nearest neighbor search for large-scale multimedia retrieval, due to its computation efficiency and retrieval quality. Deep learning to hash, which improves retrieval quality by end-to-end representation learning and hash encoding, has received increasing attention recently. Subject to the ill-posed gradient difficulty in the optimization with sign activations, existing deep learning to hash methods need to first learn continuous representations and then generate binary hash codes in a separated binarization step, which suffer from substantial loss of retrieval quality. This work presents HashNet, a novel deep architecture for deep learning to hash by continuation method with convergence guarantees, which learns exactly binary hash codes from imbalanced similarity data. The key idea is to attack the ill-posed gradient problem in optimizing deep networks with non-smooth binary activations by continuation method, in which we begin from learning an easier network with smoothed activation function and let it evolve during the training, until it eventually goes back to being the original, difficult to optimize, deep network with the sign activation function. Comprehensive empirical evidence shows that HashNet can generate exactly binary hash codes and yield state-of-the-art multimedia retrieval performance on standard benchmarks.
\end{abstract}

\vspace{-20pt}

\section{Introduction}
In the big data era, large-scale and high-dimensional media data has been pervasive in search engines and social networks. To guarantee retrieval quality and computation efficiency, approximate nearest neighbors (ANN) search has attracted increasing attention. Parallel to the traditional indexing methods \cite{cite:TOMM06CBIR}, another advantageous solution is hashing methods \cite{cite:Arxiv14HashSurvey}, which transform high-dimensional media data into compact binary codes and generate similar binary codes for similar data items. In this paper, we will focus on learning to hash methods \cite{cite:Arxiv14HashSurvey} that build data-dependent hash encoding schemes for efficient image retrieval, which have shown better performance than data-independent hashing methods, e.g. Locality-Sensitive Hashing (LSH) \cite{cite:VLDB99LSH}.

Many learning to hash methods have been proposed to enable efficient ANN search by Hamming ranking of compact binary hash codes \cite{cite:NIPS09BRE,cite:CVPR11ITQ,cite:ICML11MLH,cite:CVPR12MIH,cite:CVPR12KSH,cite:TPAMI12SSH,cite:CVPR13HBS,cite:CVPR13BP,cite:ICML14CBE,cite:SIGIR14LFH}.
Recently, deep learning to hash methods \cite{cite:AAAI14CNNH,cite:CVPR15DNNH,cite:CVPR15SDH,cite:CVPR15DH,cite:AAAI16DHN,cite:IJCAI16DPSH,cite:CVPR2016DSH} have shown that end-to-end learning of feature representation and hash coding can be more effective using deep neural networks \cite{cite:NIPS12CNN,cite:TPAMI13DL}, which can naturally encode any nonlinear hash functions. These deep learning to hash methods have shown state-of-the-art performance on many benchmarks. In particular, it proves crucial to jointly learn similarity-preserving representations and control quantization error of binarizing continuous representations to binary codes \cite{cite:AAAI16DHN,cite:IJCAI16DPSH,cite:CVPR15DSRH,cite:CVPR2016DSH}. However, a key disadvantage of these deep learning to hash methods is that they need to first learn continuous deep representations, which are binarized into hash codes in a separated post-step of sign thresholding. By \emph{continuous relaxation}, i.e.  solving the discrete optimization of hash codes with continuous optimization, all these methods essentially solve an optimization problem that deviates significantly from the hashing objective as they cannot learn \emph{exactly binary} hash codes in their optimization procedure. Hence, existing deep hashing methods may fail to generate compact binary hash codes for efficient similarity retrieval.

There are two key challenges to enabling deep learning to hash truly end-to-end. First, converting deep representations, which are \emph{continuous} in nature, to \emph{exactly binary} hash codes, we need to adopt the \emph{sign} function $h = \operatorname{sgn} \left( z \right)$ as activation function when generating binary hash codes using similarity-preserving learning in deep neural networks. However, the gradient of the sign function is zero for all nonzero inputs, which will make standard back-propagation infeasible. This is known as the \emph{ill-posed gradient} problem, which is the key difficulty in training deep neural networks via back-propagation \cite{cite:NC06DBN}. Second, the similarity information is usually very sparse in real retrieval systems, i.e., the number of similar pairs is much smaller than the number of dissimilar pairs. This will result in the \emph{data imbalance} problem, making similarity-preserving learning ineffective. Optimizing deep networks with \emph{sign} activation remains an open problem and a key challenge for deep learning to hash.

This work presents \textbf{HashNet}, a new architecture for deep learning to hash by continuation  with convergence guarantees, which addresses the ill-posed gradient and data imbalance problems in an end-to-end framework of deep feature learning and binary hash encoding. 
Specifically, we attack the \emph{ill-posed gradient} problem in the non-convex optimization of the deep networks with non-smooth sign activation by the \emph{continuation} methods \cite{cite:Book12Continuation}, which address a complex optimization problem by smoothing the original function, turning it into a different problem that is easier to optimize. By gradually reducing the amount of smoothing during the training, it results in a sequence of optimization problems converging to the original optimization problem. A novel weighted pairwise cross-entropy loss function is designed for similarity-preserving learning from imbalanced similarity relationships. Comprehensive experiments testify that HashNet can generate exactly binary hash codes and yield state-of-the-art retrieval performance on standard datasets.

\section{Related Work}
Existing learning to hash methods can be organized into two categories: unsupervised hashing and supervised hashing. We refer readers to \cite{cite:Arxiv14HashSurvey} for a comprehensive survey.

Unsupervised hashing methods learn hash functions that encode data points to binary codes by training from unlabeled data. Typical learning criteria include reconstruction error minimization \cite{cite:AI07SemanticHashing,cite:CVPR11ITQ,cite:TPAMI11PQ} and graph learning\cite{cite:NIPS09SH,cite:ICML11AGH}. While unsupervised methods are more general and can be trained without semantic labels or relevance information, they are subject to the semantic gap dilemma \cite{cite:TPAMI00SemanticGap} that high-level semantic description of an object differs from low-level feature descriptors. Supervised methods can incorporate semantic labels or relevance information to mitigate the semantic gap and improve the hashing quality significantly. 
Typical supervised methods include Binary Reconstruction Embedding (BRE) \cite{cite:NIPS09BRE}, Minimal Loss Hashing (MLH) \cite{cite:ICML11MLH} and Hamming Distance Metric Learning \cite{cite:NIPS12HDML}.
Supervised Hashing with Kernels (KSH) \cite{cite:CVPR12KSH} generates hash codes by minimizing the Hamming distances across similar pairs and maximizing the Hamming distances across dissimilar pairs.

As deep convolutional neural network (CNN) \cite{cite:NIPS12CNN,cite:CVPR16DRL} yield breakthrough performance on many computer vision tasks, deep learning to hash has attracted attention recently. CNNH \cite{cite:AAAI14CNNH} adopts a two-stage strategy in which the first stage learns hash codes and the second stage learns a deep network to map input images to the hash codes. DNNH \cite{cite:CVPR15DNNH} improved the two-stage CNNH with a simultaneous feature learning and hash coding  pipeline such that representations and hash codes can be optimized in a joint learning process. DHN \cite{cite:AAAI16DHN} further improves DNNH by a cross-entropy loss and a quantization loss which preserve the pairwise similarity and control the quantization error simultaneously. DHN obtains state-of-the-art performance on several benchmarks.

However, existing deep learning to hash methods only learn continuous codes ${\bm g}$ and need a binarization post-step to generate binary codes ${\bm h}$. By continuous relaxation, these methods essentially solve an optimization problem $ L({\bm g})$ that deviates significantly from the hashing objective $ L({\bm h})$, because they cannot keep the codes exactly binary after convergence. Denote by $Q({\bm g}, {\bm h})$ the quantization error function by binarizing continuous codes ${\bm g}$ into binary codes ${\bm h}$. Prior methods control the quantization error in two ways: \textbf{(a)} $\min L({\bm g}) + Q({\bm g}, {\bm h})$ through continuous optimization \cite{cite:AAAI16DHN,cite:IJCAI16DPSH}; \textbf{(b)} $\min L({\bm h}) + Q({\bm g}, {\bm h})$ through discrete optimization on $L({\bm h})$ but continuous optimization on $Q({\bm g}, {\bm h})$ (the continuous optimization is used for out-of-sample extension as discrete optimization cannot be extended to the test data) \cite{cite:CVPR2016DSH}. However, since $Q({\bm g}, {\bm h})$ cannot be minimized to zero, there is a large gap between continuous codes and binary codes. To directly optimize $\min L({\bm h})$, we must adopt \emph{sign} as the \emph{activation} function \emph{within} deep networks, which enables generation of exactly binary codes but introduces the \emph{ill-posed gradient} problem. This work is the first effort to learn sign-activated deep networks by continuation method, which can directly optimize $L({\bm h})$ for deep learning to hash.

\begin{figure*}[tbp]
  \centering
  \includegraphics[width=0.66\textwidth]{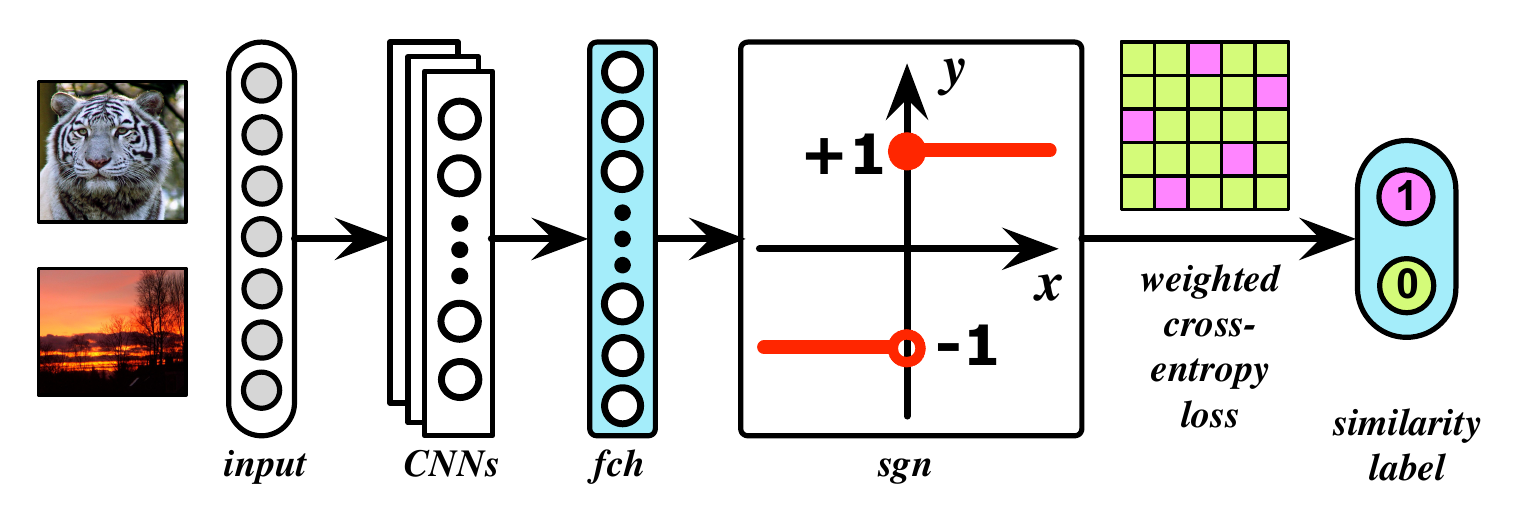}
  \includegraphics[width=0.33\textwidth]{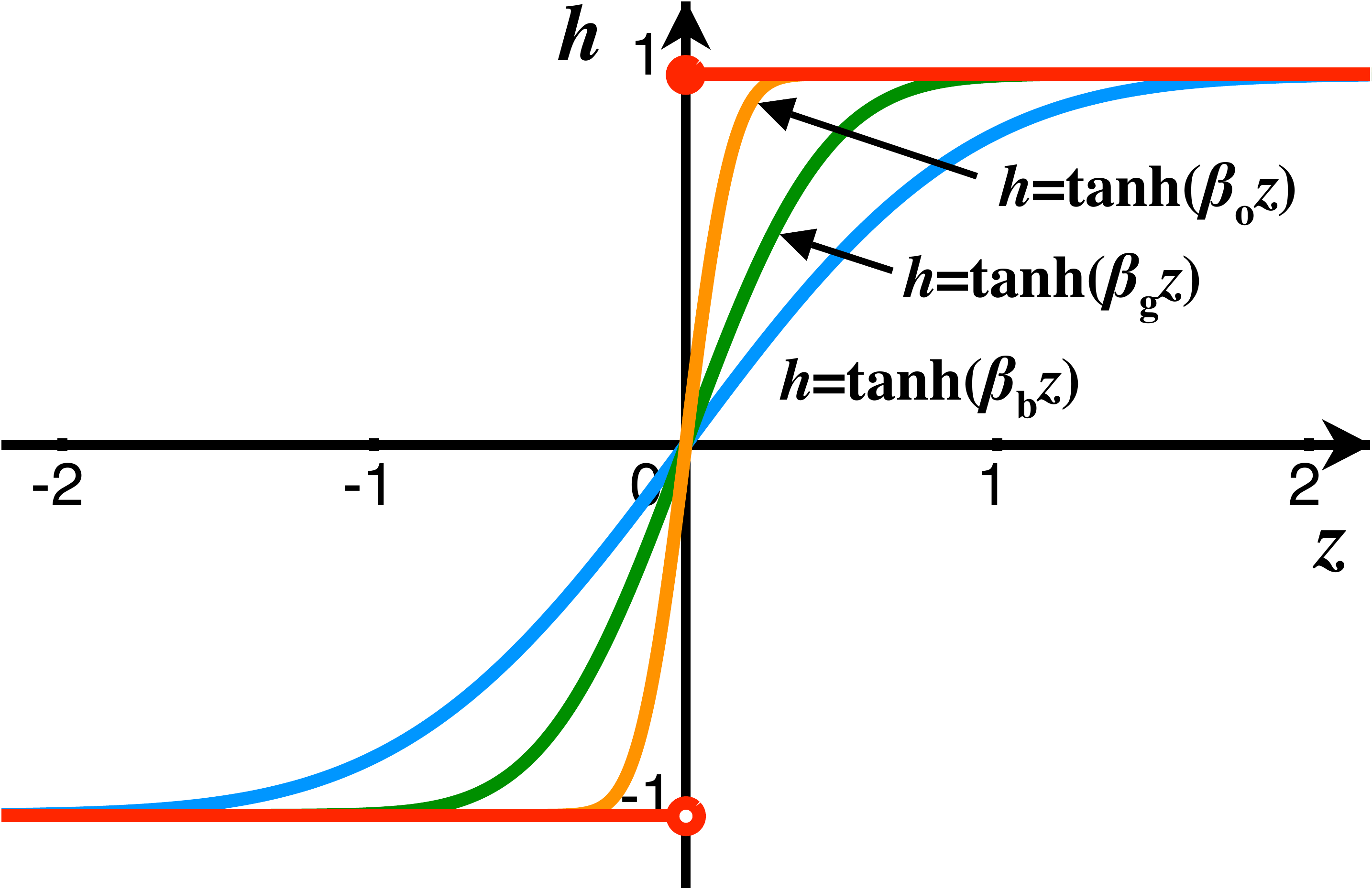}
  \caption{(left) The proposed HashNet for deep learning to hash by continuation, which is comprised of four key components: (1) Standard convolutional neural network (CNN), e.g. AlexNet and ResNet, for learning deep image representations, (2) a fully-connected hash layer ($fch$) for transforming the deep representation into $K$-dimensional representation, (3) a sign activation function ($\operatorname{sgn}$) for binarizing the $K$-dimensional representation into $K$-bit binary hash code, and (4) a novel weighted cross-entropy loss for similarity-preserving learning from sparse data. (right) Plot of smoothed responses of the sign function $h = \operatorname{sgn} \left( z \right)$: Red is the sign function, and blue, green and orange show functions $h = \operatorname{tanh} \left( \beta z \right)$ with bandwidths $\beta_{\text{b}} < \beta_{\text{g}} < \beta_{\text{o}}$. The key property is $\mathop {\lim }\nolimits_{\beta  \to \infty } \tanh \left( {\beta z} \right) = \operatorname{sgn} \left( z \right)$. \textit{Best viewed in color}.}
  \label{fig:HashNet}
  \vspace{-10pt}
\end{figure*}

\section{HashNet}
In similarity retrieval systems, we are given a training set of $N$ points $\{{\bm x}_i\}_{i=1}^N$, each represented by a $D$-dimensional feature vector ${\bm x}_i \in \mathbb{R}^{D}$.
Some pairs of points ${\bm x}_i$ and ${\bm x}_j$ are provided with similarity labels $s_{ij}$, where $s_{ij} = 1$ if ${\bm x}_i$ and ${\bm x}_j$ are similar while $s_{ij} = 0$ if ${\bm x}_i$ and ${\bm x}_j$ are dissimilar. 
The goal of deep learning to hash is to learn nonlinear hash function $f:{\bm{x}} \mapsto {\bm{h}} \in {\left\{ { - 1,1} \right\}^K}$ from input space $\mathbb{R}^D$ to Hamming space $\{-1,1\}^K$ using deep neural networks, which encodes each point ${\bm x}$ into compact $K$-bit binary hash code ${\bm h} = f({\bm x})$ such that the similarity information between the given pairs ${\cal S}$ can be preserved in the compact hash codes. In supervised hashing, the similarity set $\mathcal{S} = \{s_{ij}\}$ can be constructed from semantic labels of data points or relevance feedback from click-through data in real retrieval systems.

To address the data imbalance and ill-posed gradient problems in an end-to-end learning framework, this paper presents \textbf{HashNet}, a novel architecture for deep learning to hash by continuation, shown in Figure~\ref{fig:HashNet}. 
The architecture accepts pairwise input images $\{({\bm x}_i, {\bm x}_j, s_{ij})\}$ and processes them through an end-to-end pipeline of deep representation learning and binary hash coding: (1) a convolutional network (CNN) for learning deep representation of each image ${\bm x}_i$, (2) a fully-connected hash layer ($fch$) for transforming the deep representation into $K$-dimensional representation ${\bm z}_i \in \mathbb{R}^K$, (3) a sign activation function $h = \operatorname{sgn} \left( z \right)$ for binarizing the $K$-dimensional representation ${\bm z}_i$ into $K$-bit binary hash code ${\bm h}_i \in \{-1,1\}^K$, and (4) a novel weighted cross-entropy loss for similarity-preserving learning from imbalanced data. 
We attack the ill-posed gradient problem of the non-smooth activation function $h = \operatorname{sgn} \left( z \right)$ by continuation, which starts with a smoothed activation function $y = \operatorname{tanh} \left( \beta x \right)$ and becomes more non-smooth by increasing $\beta$ as the training proceeds, until eventually goes back to the original, difficult to optimize, sign activation function.

\subsection{Model Formulation}
To perform deep learning to hash from imbalanced data, we jointly preserve similarity information of pairwise images and generate binary hash codes by weighted maximum likelihood \cite{cite:JMLR10MWL}. For a pair of binary hash codes ${\bm h}_i$ and ${\bm h}_j$, there exists a nice relationship between their Hamming distance $\mathrm{dist}_H(\cdot,\cdot)$ and inner product $\langle \cdot,\cdot \rangle$: ${\textrm{dis}}{{\text{t}}_H}\left( {{{\bm{h}}_i},{{\bm{h}}_j}} \right) = \frac{1}{2}\left( {K - \left\langle {{{\bm{h}}_i},{{\bm{h}}_j}} \right\rangle } \right)$. Hence, the Hamming distance and inner product can be used interchangeably for binary hash codes, and we adopt inner product to quantify pairwise similarity. Given the set of pairwise similarity labels $\mathcal{S} = \{s_{ij}\}$, the Weighted Maximum Likelihood (WML) estimation of the hash codes ${\bm H} = [{\bm h}_1,\ldots,{\bm h}_N]$ for all $N$ training points is
\begin{equation}\label{eqn:WML}
	\log P\left( {\mathcal{S}|{\bm{H}}} \right) = \sum\limits_{{s_{ij}} \in \mathcal{S}} {{w_{ij}}\log P\left( {{s_{ij}}|{{\bm{h}}_i},{{\bm{h}}_j}} \right)} ,
\end{equation}
where $P({\cal S}|{\bm H})$ is the weighted likelihood function, and $w_{ij}$ is the weight for each training pair $({\bm x}_i, {\bm x}_j, s_{ij})$, which is used to tackle the data imbalance problem by weighting the training pairs according to the importance of misclassifying that pair \cite{cite:JMLR10MWL}. Since each similarity label in ${\cal S}$ can only be $s_{ij} = 1$ (similar) or $s_{ij} = 0$ (dissimilar), to account for the data imbalance between similar and dissimilar pairs, we set 
\begin{equation}\label{eqn:Weight}
{w_{ij}} = c_{ij} \cdot
  \begin{cases}
    \left| {{\mathcal{S}}} \right|/\left| {{\mathcal{S}_1}} \right|,\quad {s_{ij}} = 1 \hfill \\
    \left| {{\mathcal{S}}} \right|/\left| {{\mathcal{S}_0}} \right|,\quad {s_{ij}} = 0 \hfill \\ 
  \end{cases}
\end{equation}
where ${\mathcal{S}_1} = \left\{ {{s_{ij}} \in \mathcal{S}:{s_{ij}} = 1} \right\}$ is the set of similar pairs and ${\mathcal{S}_0} = \left\{ {{s_{ij}} \in \mathcal{S}:{s_{ij}} = 0} \right\}$ is the set of dissimilar pairs; $c_{ij}$ is continuous similarity, i.e. $c_{ij} = \frac{{{\bm{y}_i} \cap {\bm{y}_j}}}{{{\bm{y}_i} \cup {\bm{y}_j}}}$ if labels $\bm{y}_i$ and $\bm{y}_j$ of $\bm{x}_i$ and $\bm{x}_j$ are given, $c_{ij}=1$ if only $s_{ij}$ is given.
For each pair, $P(s_{ij}|{\bm h}_i,{\bm h}_j)$ is the conditional probability of similarity label $s_{ij}$ given a pair of hash codes ${\bm h}_i$ and ${\bm h}_j$, which can be naturally defined as pairwise logistic function,
\begin{equation}\label{eqn:PLR}
	\begin{aligned}
	P\left( {{s_{ij}}|{{\bm{h}}_i},{{\bm{h}}_j}} \right) &= 
		\begin{cases}
			\sigma \left( {\left\langle {{\bm{h}}_i, {\bm{h}}_j} \right\rangle } \right), & {s_{ij}} = 1 \\
			1 - \sigma \left( {\left\langle {{\bm{h}}_i, {\bm{h}}_j} \right\rangle } \right), & {s_{ij}} = 0 \\
		\end{cases} \\
		& = \sigma {\left( {\left\langle {{\bm{h}}_i,{\bm{h}}_j} \right\rangle } \right)^{{s_{ij}}}}{\left( {1 - \sigma \left( {\left\langle {{\bm{h}}_i,{\bm{h}}_j} \right\rangle } \right)} \right)^{1 - {s_{ij}}}} \\
	\end{aligned}
\end{equation}
where $\sigma \left( x \right) = {1}/({{1 + {e^{ - \alpha x}}}})$ is the \emph{adaptive} sigmoid function with hyper-parameter $\alpha$ to control its bandwidth. Note that the sigmoid function with larger $\alpha$ will have larger saturation zone where its gradient is zero. To perform more effective back-propagation, we usually require $\alpha < 1$, which is more effective than the typical setting of $\alpha = 1$. Similar to logistic regression, we can see in pairwise logistic regression that the smaller the Hamming distance ${\text{dis}}{{\textrm{t}}_H}\left( {{{\bm{h}}_i},{{\bm{h}}_j}} \right)$ is, the larger the inner product ${\left\langle {{{\bm{h}}_i},{{\bm{h}}_j}} \right\rangle }$ as well as the conditional probability $P\left( {1|{{\bm{h}}_i},{{\bm{h}}_j}} \right)$ will be, implying that pair ${\bm h}_i$ and ${\bm h}_j$ should be classified as similar; otherwise, the larger the conditional probability $P\left( {0|{{\bm{h}}_i},{{\bm{h}}_j}} \right)$ will be, implying that pair ${\bm h}_i$ and ${\bm h}_j$ should be classified as dissimilar. Hence, Equation~\eqref{eqn:PLR} is a reasonable extension of the logistic regression classifier to the pairwise classification scenario, which is optimal for binary similarity labels $s_{ij}\in\{0,1\}$.

By taking Equation \eqref{eqn:PLR} into WML estimation in Equation~\eqref{eqn:WML}, we achieve the optimization problem of HashNet,
\begin{equation}\label{eqn:model}
	\mathop {\min }\limits_{\Theta}  \sum\limits_{{s_{ij}} \in \mathcal{S}} {{w_{ij}}\left( {\log \left( {1 + \exp \left( {\alpha \left\langle {{{\bm{h}}_i},{{\bm{h}}_j}} \right\rangle } \right)} \right) - \alpha {s_{ij}}\left\langle {{{\bm{h}}_i},{{\bm{h}}_j}} \right\rangle } \right)},
\end{equation}
where ${\Theta}$ denotes the set of all parameters in deep networks. Note that, HashNet directly uses the sign activation function ${{\bm{h}}_i} = \operatorname{sgn} \left( {{{\bm{z}}_i}} \right)$ which converts the $K$-dimensional representation to \emph{exactly} binary hash codes, as shown in Figure~\ref{fig:HashNet}. By optimizing the WML estimation in Equation~\eqref{eqn:model}, we can enable deep learning to hash from imbalanced data under a statistically optimal framework. It is noteworthy that our work is the first attempt that extends the WML estimation from pointwise scenario to pairwise scenario. The HashNet can jointly preserve similarity information of pairwise images and generate \emph{exactly} binary hash codes. Different from HashNet, previous deep-hashing methods need to first learn continuous embeddings, which are binarized in a separated step using the sign function. This will result in substantial quantization errors and significant losses of retrieval quality.

\subsection{Learning by Continuation}
HashNet learns \emph{exactly} binary hash codes by converting the $K$-dimensional representation ${\bm z}$ of the hash layer $fch$, which is continuous in nature, to binary hash code ${\bm h}$ taking values of either $+1$ or $-1$. This binarization process can only be performed by taking the sign function $h = \operatorname{sgn} \left( z \right)$ as activation function on top of hash layer $fch$ in HashNet,
\begin{equation}\label{eqn:sgn}
  h = \operatorname{sgn} \left( z \right) =
  \begin{cases}
   + 1,\quad {\text{if}}\;z \geqslant 0 \\
   - 1,\quad {\text{otherwise}} \\ 
  \end{cases}
\end{equation}
Unfortunately, as the sign function is non-smooth and non-convex, its gradient is zero for all nonzero inputs, and is ill-defined at zero, which makes the standard back-propagation infeasible for training deep networks. This is known as the \emph{vanishing gradient} problem, which has been a key difficulty in training deep neural networks via back-propagation \cite{cite:NC06DBN}. 

Many optimization methods have been proposed to circumvent the vanishing gradient problem and enable effective network training with back-propagation, including unsupervised pre-training \cite{cite:NC06DBN,cite:NIPS07LWT}, dropout \cite{cite:JMLR14Dropout}, batch normalization \cite{cite:ICML15BN}, and deep residual learning \cite{cite:CVPR16DRL}. In particular, Rectifier Linear Unit (ReLU) \cite{cite:ICML10ReLU} activation function makes deep networks much easier to train and enables end-to-end learning algorithms. However, the sign activation function is so ill-defined that all the above optimization methods will fail. A very recent work, BinaryNet \cite{cite:NIPS16BinaryNet}, focuses on training deep networks with activations constrained to $+1$ or $-1$. However, the training algorithm may be hard to converge as the feed-forward pass uses the sign activation ($\operatorname{sgn}$) but the back-propagation pass uses a hard tanh ($\operatorname{Htanh}$) activation. Optimizing deep networks with sign activation remains an open problem and a key challenge for deep learning to hash.

\begin{algorithm}[htbp]
    \DontPrintSemicolon
    \KwIn{A sequence $1 = {\beta _0} < {\beta _1} <  \ldots  < {\beta _m} = \infty $}
    \For{stage $t = 0$ \KwTo $m$}{
    	Train HashNet \eqref{eqn:model} with $\operatorname{tanh}(\beta_t z)$ as activation\;
		Set converged HashNet as next stage initialization\;
    }
    \KwOut{HashNet with $\operatorname{sgn}(z)$ as activation, $\beta_m \rightarrow \infty$}
	\caption{Optimizing HashNet by Continuation}
	\label{algorithm:HashNet}
\end{algorithm}

This paper attacks the problem of non-convex optimization of deep networks with non-smooth sign activation by starting with a smoothed objective function which becomes more non-smooth as the training proceeds. It is inspired by recent studies in continuation methods \cite{cite:Book12Continuation}, which address a complex optimization problem by smoothing the original function, turning it into a different problem that is easier to optimize. By gradually reducing the amount of smoothing during the training, it results in a sequence of optimization problems converging to the original optimization problem.
Motivated by the continuation methods, we notice there exists a key relationship between the sign function and the scaled tanh function in the concept of limit in mathematics,
\begin{equation}\label{eqn:tanh}
	\mathop {\lim }\nolimits_{\beta  \to \infty } \tanh \left( {\beta z} \right) = \operatorname{sgn} \left( z \right),
\end{equation}
where $\beta>0$ is a scaling parameter. Increasing $\beta$, the scaled tanh function $\operatorname{tanh}(\beta z)$ will become more non-smooth and more saturated so that the deep networks using $\operatorname{tanh}(\beta z)$ as the activation function will be more difficult to optimize, as in Figure~\ref{fig:HashNet} (right). But fortunately, as $\beta \rightarrow \infty$, the optimization problem will converge to the original deep learning to hash problem in \eqref{eqn:model} with $\operatorname{sgn}(z)$ activation function.

Using the continuation methods, we design an optimization method for HashNet in Algorithm \ref{algorithm:HashNet}. As deep network with $\operatorname{tanh}(z)$ as the activation function can be successfully trained, we start training HashNet with $\operatorname{tanh}(\beta_t z)$ as the activation function, where $\beta_0 = 1$. For each stage $t$, after HashNet converges, we increase $\beta_t$ and train (i.e. fine-tune) HashNet by setting the converged network parameters as the initialization for training the HashNet in the next stage. By evolving $\operatorname{tanh}(\beta_t z)$ with $\beta_t \rightarrow \infty$, the network will converge to HashNet with $\operatorname{sgn}(z)$ as activation function, which can generate \emph{exactly} binary hash codes as we desire. The efficacy of continuation in Algorithm \ref{algorithm:HashNet} can be understood as multi-stage \emph{pre-training}, i.e., pre-training HashNet with $\operatorname{tanh}(\beta_t z)$ activation function is used to initialize HashNet with $\operatorname{tanh}(\beta_{t+1} z)$ activation function, which enables easier progressive training of HashNet as the network is becoming non-smooth in later stages by $\beta_t \rightarrow \infty$. Using $m=10$ we can already achieve fast convergence for training HashNet.

\subsection{Convergence Analysis}\label{sec:convergence}
We analyze that the continuation method in Algorithm~\ref{algorithm:HashNet} decreases HashNet loss~\eqref{eqn:model} in each stage and each iteration.
Let $L_{ij} = {{w_{ij}}\left( {\log \left( {1 + \exp \left( {\alpha \left\langle {{{\bm{h}}_i},{{\bm{h}}_j}} \right\rangle } \right)} \right) - \alpha {s_{ij}}\left\langle {{{\bm{h}}_i},{{\bm{h}}_j}} \right\rangle } \right)}$ and $L = \sum\nolimits_{{s_{ij}} \in \mathcal{S}} L_{ij}$, where ${\bm h}_i \in \{-1,+1\}^K$ are \emph{binary} hash codes. 
Note that when optimizing HashNet by continuation in Algorithm~\ref{algorithm:HashNet}, the network activation in each stage $t$ is $g = \tanh(\beta_t z)$, which is \emph{continuous} in nature and will only become \emph{binary} after convergence ${\beta _t} \to \infty $. Denote by $J_{ij} = {{w_{ij}}\left( {\log \left( {1 + \exp \left( {\alpha \left\langle {{{\bm{g}}_i},{{\bm{g}}_j}} \right\rangle } \right)} \right) - \alpha {s_{ij}}\left\langle {{{\bm{g}}_i},{{\bm{g}}_j}} \right\rangle } \right)}$ and $J = \sum\nolimits_{{s_{ij}} \in \mathcal{S}} J_{ij}$ the true loss we optimize in Algorithm~\ref{algorithm:HashNet}, where ${\bm g}_i \in \mathbb{R}^K$ and ${\bm h}_i = \operatorname{sgn}({\bm g}_i)$. Our results are two theorems, with proofs provided in the supplemental materials.

\begin{theorem}\label{the:stage}
The HashNet loss $L$ will not change across stages $t$ and $t+1$ with bandwidths switched from $\beta_t$ to $\beta_{t+1}$.
\end{theorem}

\begin{theorem}\label{the:converge}
Loss $L$ decreases when optimizing loss $J({\bm g})$ by the stochastic gradient descent (SGD) within each stage.
\end{theorem}

\section{Experiments}
We conduct extensive experiments to evaluate HashNet against several state-of-the-art hashing methods on three standard benchmarks. Datasets and implementations are available at \url{http://github.com/thuml/HashNet}.

\subsection{Setup}
The evaluation is conducted on three benchmark image retrieval datasets: ImageNet, NUS-WIDE and MS COCO.

\textbf{ImageNet} is a benchmark image dataset for Large Scale Visual Recognition Challenge (ILSVRC 2015) \cite{cite:ILSVRC15}. It contains over 1.2M images in the training set and 50K images in the validation set, where each image is single-labeled by one of the 1,000 categories. We randomly select 100 categories, use all the images of these categories in the training set as the database, and use all the images in the validation set as the queries; furthermore, we randomly select 100 images per category from the database as the training points.

\textbf{NUS-WIDE}\footnote{\scriptsize\url{http://lms.comp.nus.edu.sg/research/NUS-WIDE.htm}} \cite{cite:CIVR09NusWide} is a public Web image dataset which contains 269,648 images downloaded from \url{Flickr.com}. Each image is manually annotated by some of the 81 ground truth concepts (categories) for evaluating retrieval models. We follow similar experimental protocols as DHN \cite{cite:AAAI16DHN} and randomly sample 5,000 images as queries, with the remaining images used as the database; furthermore, we randomly sample 10,000 images from the database as training points.

\textbf{MS COCO}\footnote{\scriptsize\url{http://mscoco.org}} \cite{cite:MSCOCO} is an image recognition, segmentation, and captioning dataset. The current release contains 82,783 training images and 40,504 validation images, where each image is labeled by some of the 80 categories. After pruning images with no category information, we obtain 12,2218 images by combining the training and validation images. We randomly sample 5,000 images as queries, with the rest images used as the database; furthermore, we randomly sample 10,000 images from the database as training points.

Following standard evaluation protocol as previous work \cite{cite:AAAI14CNNH,cite:CVPR15DNNH,cite:AAAI16DHN}, the similarity information for hash function learning and for ground-truth evaluation is constructed from image labels: if two images $i$ and $j$ share at least one label, they are similar and  $s_{ij}=1$; otherwise, they are dissimilar and $s_{ij}=0$. Note that, although we use the image labels to construct the similarity information, our proposed HashNet can learn hash codes when only the similarity information is available. By constructing the training data in this way, the ratio between the number of  dissimilar pairs and the number of similar pairs is roughly 100, 5, and 1 for ImageNet, NUS-WIDE, and MS COCO, respectively. These datasets exhibit the data imbalance phenomenon and can be used to evaluate different hashing methods under data imbalance scenario.

We compare retrieval performance of \textbf{HashNet} with ten classical or state-of-the-art hashing methods: unsupervised methods \textbf{LSH} \cite{cite:VLDB99LSH}, \textbf{SH} \cite{cite:NIPS09SH}, \textbf{ITQ} \cite{cite:CVPR11ITQ}, supervised shallow methods \textbf{BRE} \cite{cite:NIPS09BRE}, \textbf{KSH} \cite{cite:CVPR12KSH}, \textbf{ITQ-CCA} \cite{cite:CVPR11ITQ}, \textbf{SDH} \cite{cite:CVPR15SDH}, and supervised deep methods  \textbf{CNNH} \cite{cite:AAAI14CNNH}, \textbf{DNNH} \cite{cite:CVPR15DNNH}, \textbf{DHN} \cite{cite:AAAI16DHN}. %To comprehensively compare performance of different methods, 
We evaluate retrieval quality based on five standard evaluation metrics:  Mean Average Precision (MAP), Precision-Recall curves (PR), Precision curves within Hamming distance 2 (P@H=2), Precision curves with respect to different numbers of top returned samples (P@N), and Histogram of learned codes without binarization. For fair comparison, all methods use identical training and test sets. We adopt MAP@1000 for ImageNet as each category has 1,300 images, and adopt MAP@5000 for the other datasets \cite{cite:AAAI16DHN}.
 
For shallow hashing methods, we use DeCAF$_7$ features \cite{cite:ICML14DeCAF} as input. For deep hashing methods, we use raw images as input. We adopt the AlexNet architecture \cite{cite:NIPS12CNN} for all deep hashing methods, and implement HashNet based on the \textbf{Caffe} framework \cite{cite:MM14Caffe}. We fine-tune convolutional layers $conv1$--$conv5$ and fully-connected layers $fc6$--$fc7$ copied from the AlexNet model pre-trained on ImageNet 2012 and train the hash layer $fch$, all through back-propagation. As the $fch$ layer is trained from scratch, we set its learning rate to be 10 times that of the lower layers. We use mini-batch stochastic gradient descent (SGD) with 0.9 momentum and the learning rate annealing strategy implemented in Caffe, and cross-validate the learning rate from $10^{-5}$ to $10^{-3}$ with a multiplicative step-size ${10}^{\frac{1}{2}}$. We fix the mini-batch size of images as $256$ and the weight decay parameter as $0.0005$.

\begin{table*}[tb]
    \centering 
    \addtolength{\tabcolsep}{-2pt}
    \caption{Mean Average Precision (MAP) of Hamming Ranking for Different Number of Bits on the Three Image Datasets}
    \label{table:MAP}
    \begin{tabular}{c|cccc|cccc|cccc}
        \Xhline{1.0pt}
        \multirow{2}{30pt}{\centering Method} & \multicolumn{4}{c|}{ImageNet} & \multicolumn{4}{c|}{NUS-WIDE} & \multicolumn{4}{c}{MS COCO} \\
        \cline{2-13}
        & 16 bits & 32 bits  & 48 bits  & 64 bits  & 16 bits & 32 bits  & 48 bits  & 64 bits  & 16 bits & 32 bits  & 48 bits  & 64 bits \\
        \hline
        HashNet & \textbf{0.5059} & \textbf{0.6306} & \textbf{0.6633} & \textbf{0.6835} & \textbf{0.6623} & \textbf{0.6988} & \textbf{0.7114} & \textbf{0.7163} & \textbf{0.6873} & \textbf{0.7184} & \textbf{0.7301} & \textbf{0.7362} \\
        DHN \cite{cite:AAAI16DHN} & 0.3106 & \underline{0.4717} & 0.5419 & 0.5732 & \underline{0.6374} & \underline{0.6637} & \underline{0.6692} & \underline{0.6714} & \underline{0.6774} & \underline{0.7013} & \underline{0.6948} & \underline{0.6944} \\
        DNNH \cite{cite:CVPR15DNNH} & 0.2903 & 0.4605 & 0.5301 & 0.5645 & 0.5976 & 0.6158 & 0.6345 & 0.6388 & 0.5932 & 0.6034 & 0.6045 & 0.6099 \\
        CNNH \cite{cite:AAAI14CNNH} & 0.2812 & 0.4498 & 0.5245 & 0.5538 & 0.5696 & 0.5827 & 0.5926 & 0.5996 & 0.5642 & 0.5744 & 0.5711 & 0.5671 \\
        SDH \cite{cite:CVPR15SDH} & 0.2985 & 0.4551 & \underline{0.5549} & \underline{0.5852} & 0.4756 & 0.5545 & 0.5786 & 0.5812 & 0.5545 & 0.5642 & 0.5723 & 0.5799 \\
        KSH \cite{cite:CVPR12KSH} & 0.1599 & 0.2976 & 0.3422 & 0.3943 & 0.3561 & 0.3327 & 0.3124 & 0.3368 & 0.5212 & 0.5343 & 0.5343 & 0.5361 \\
        ITQ-CCA \cite{cite:CVPR11ITQ} & 0.2659 & 0.4362 & 0.5479 & 0.5764 & 0.4598 & 0.4052 & 0.3732 & 0.3467 & 0.5659 & 0.5624 & 0.5297 & 0.5019 \\
        ITQ \cite{cite:CVPR11ITQ} & \underline{0.3255} & 0.4620 & 0.5170 & 0.5520 & 0.5086 & 0.5425 & 0.5580 & 0.5611 & 0.5818 & 0.6243 & 0.6460 & 0.6574 \\
        BRE \cite{cite:NIPS09BRE} & 0.0628 & 0.2525 & 0.3300 & 0.3578 & 0.5027 & 0.5290 & 0.5475 & 0.5546 & 0.5920 & 0.6224 & 0.6300 & 0.6336 \\
        SH \cite{cite:NIPS09SH} & 0.2066 & 0.3280 & 0.3951 & 0.4191 & 0.4058 & 0.4209 & 0.4211 & 0.4104 & 0.4951 & 0.5071 & 0.5099 & 0.5101 \\
        LSH \cite{cite:VLDB99LSH} & 0.1007 & 0.2350 & 0.3121 & 0.3596 & 0.3283 & 0.4227 & 0.4333 & 0.5009 & 0.4592 & 0.4856 & 0.5440 & 0.5849 \\
        \Xhline{1.0pt}
    \end{tabular}
\end{table*}

\begin{figure*}[!phtb]
    \centering
    \subfigure[Precision within Hamming radius 2]{
        \includegraphics[width=0.28\textwidth]{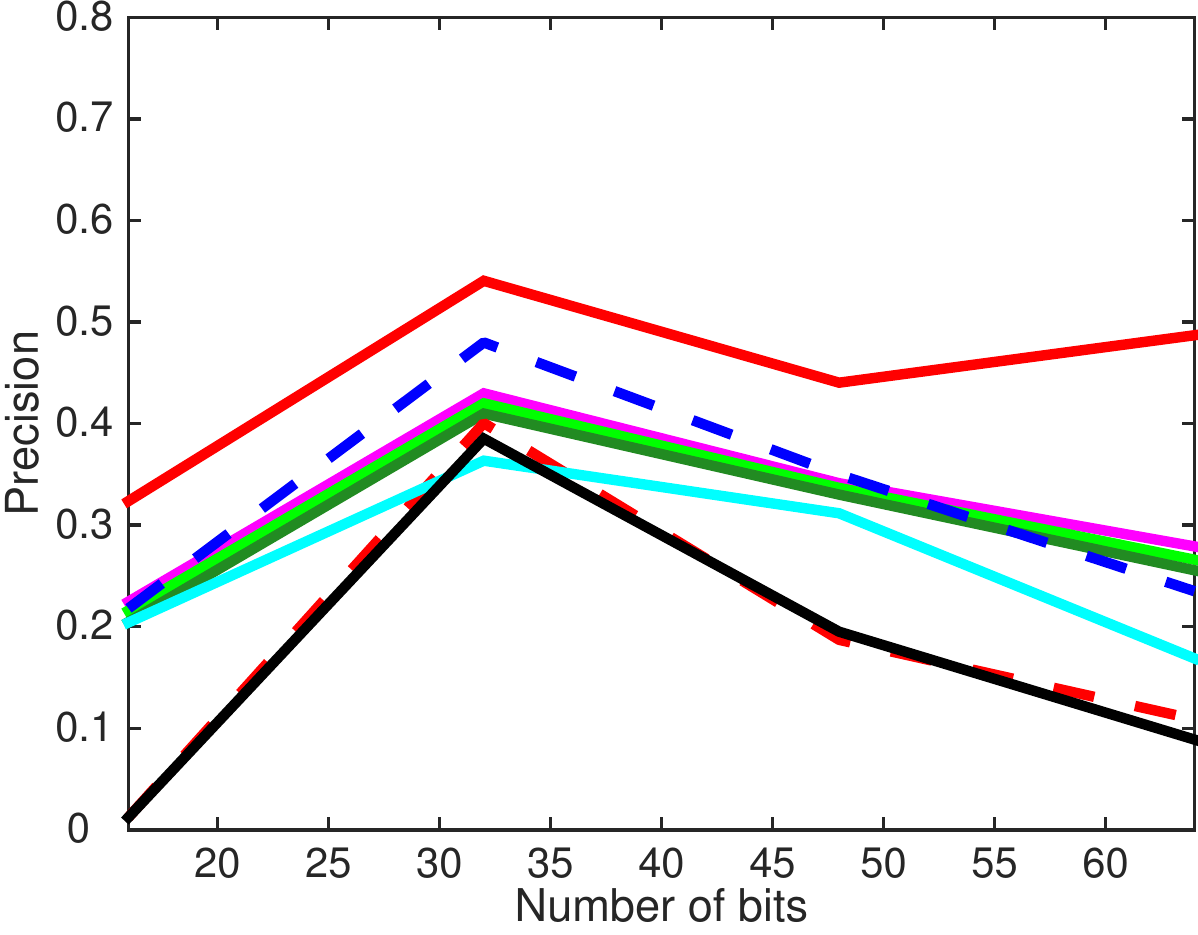}
        \label{fig:ham_imagenet}
    }
		\hfil
    \subfigure[Precision-recall curve @ 64 bits]{
        \includegraphics[width=0.28\textwidth]{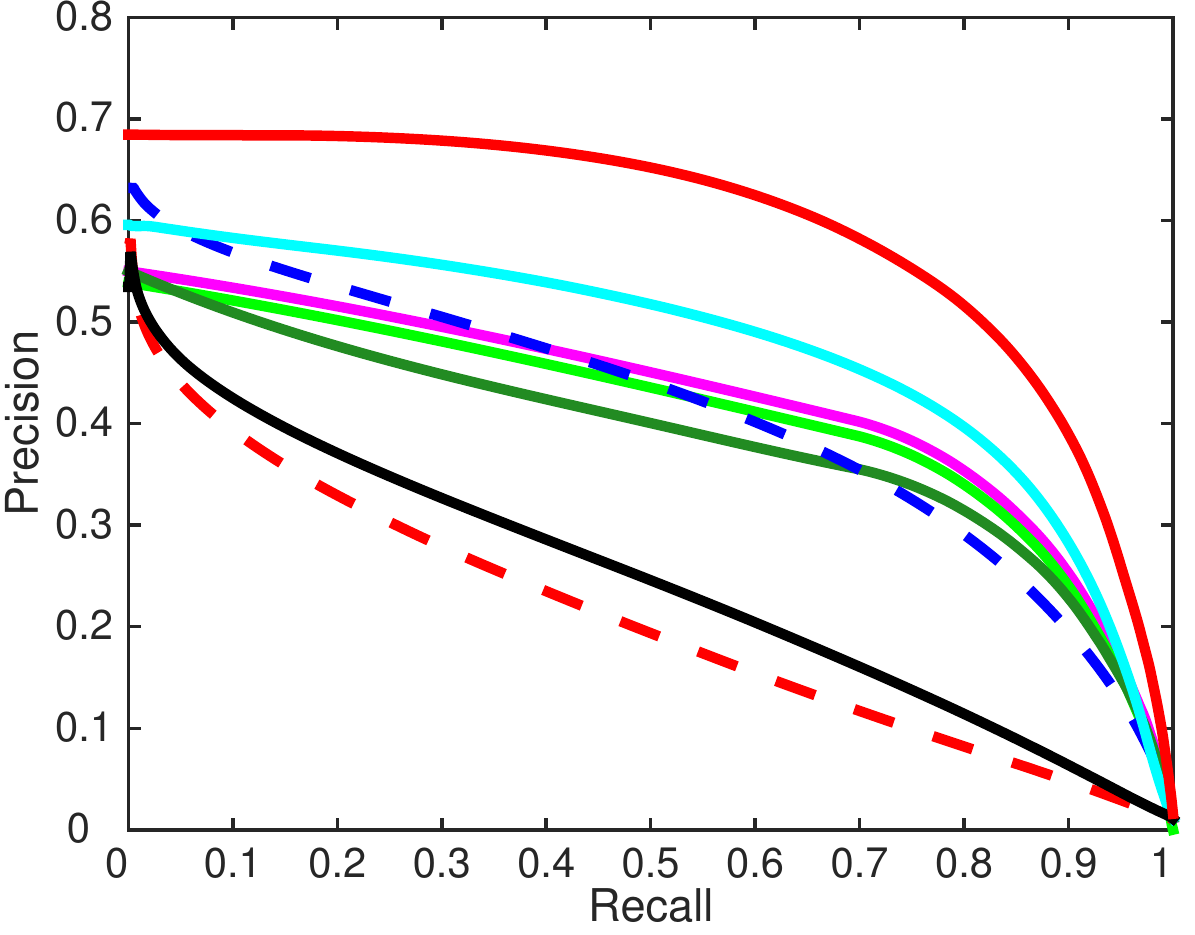}
        \label{fig:pr_imagenet}
    }
		\hfil
    \subfigure[Precision curve w.r.t. top-$N$ @ 64 bits]{
        \includegraphics[width=0.36\textwidth]{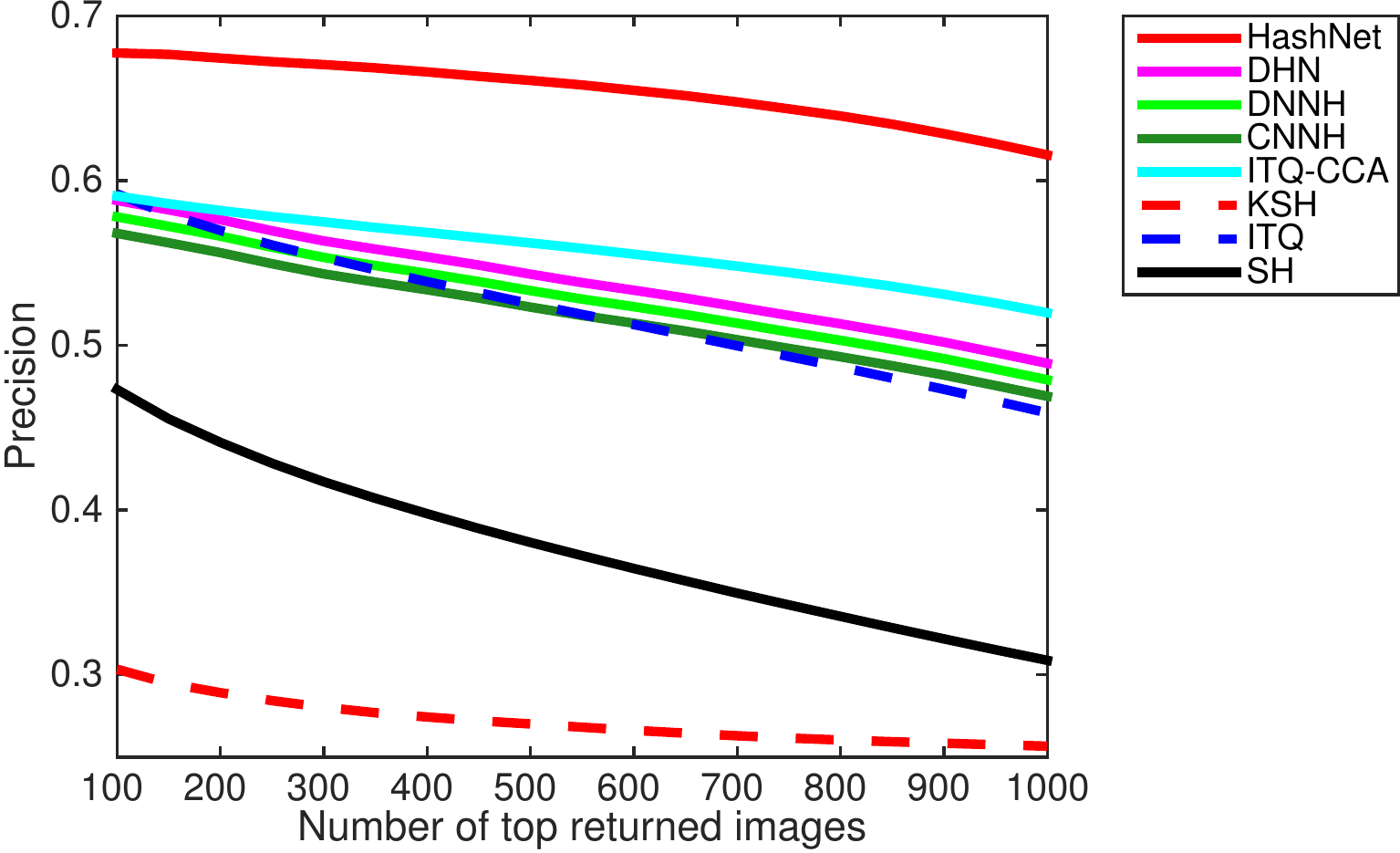}
        \label{fig:prec_imagenet}
    }
    \caption{The experimental results of HashNet and comparison methods on the ImageNet dataset under three evaluation metrics.}
    \label{fig:imagenet}
\end{figure*}

\begin{figure*}[!phtb]
    \centering
    \subfigure[Precision within Hamming radius 2]{
        \includegraphics[width=0.28\textwidth]{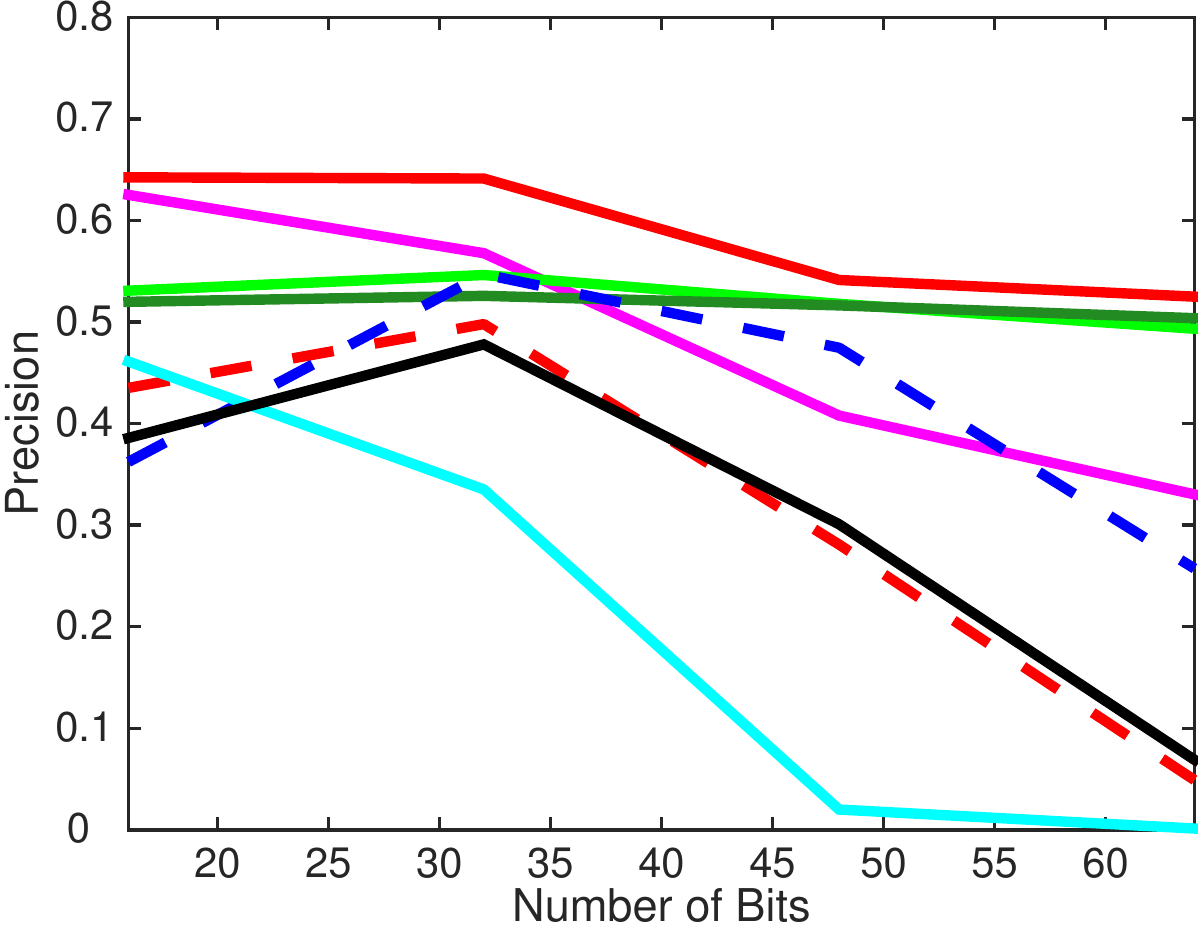}
        \label{fig:ham_nus}
    }
		\hfil
    \subfigure[Precision-recall curve @ 64 bits]{
        \includegraphics[width=0.28\textwidth]{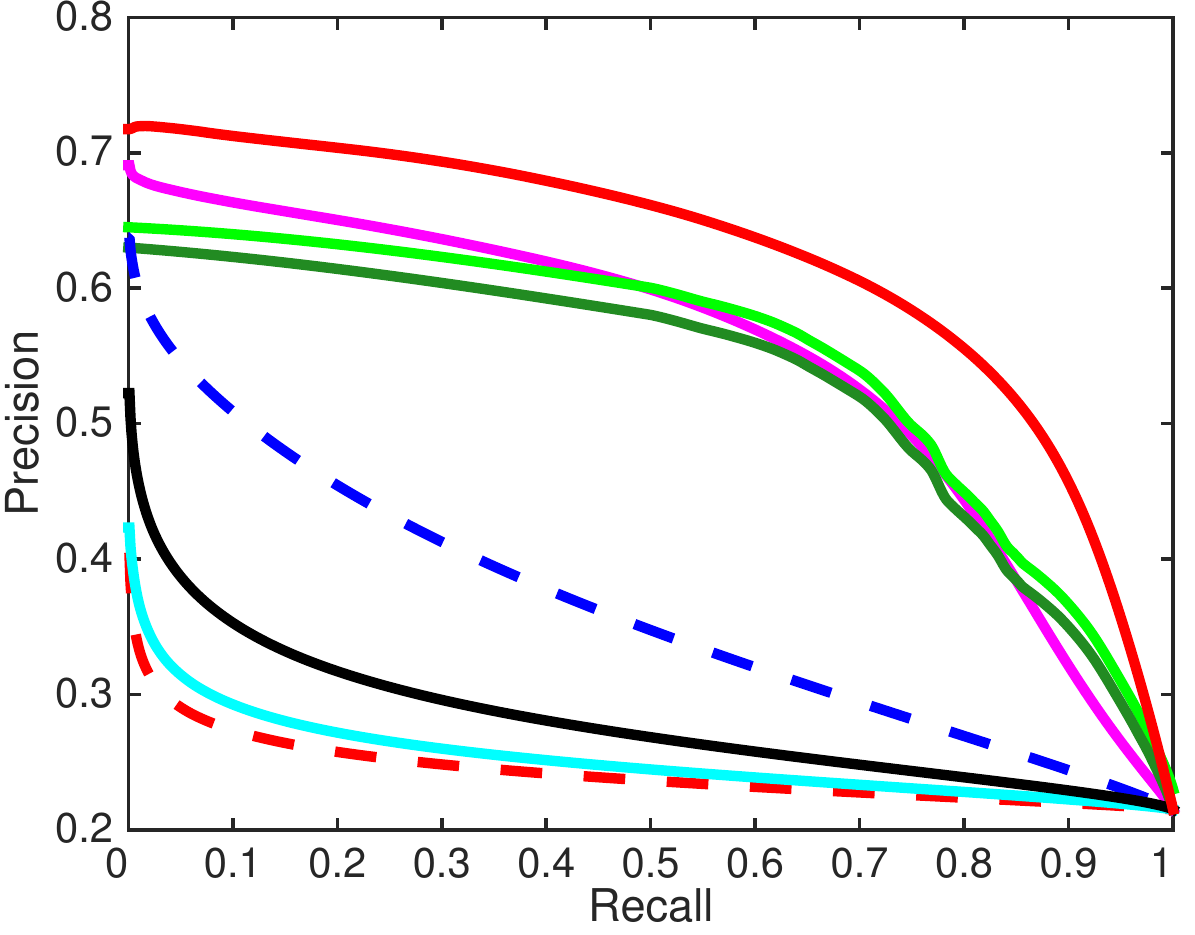}
        \label{fig:pr_nus}
    }
		\hfil
    \subfigure[Precision curve w.r.t. top-$N$ @ 64 bits]{
        \includegraphics[width=0.36\textwidth]{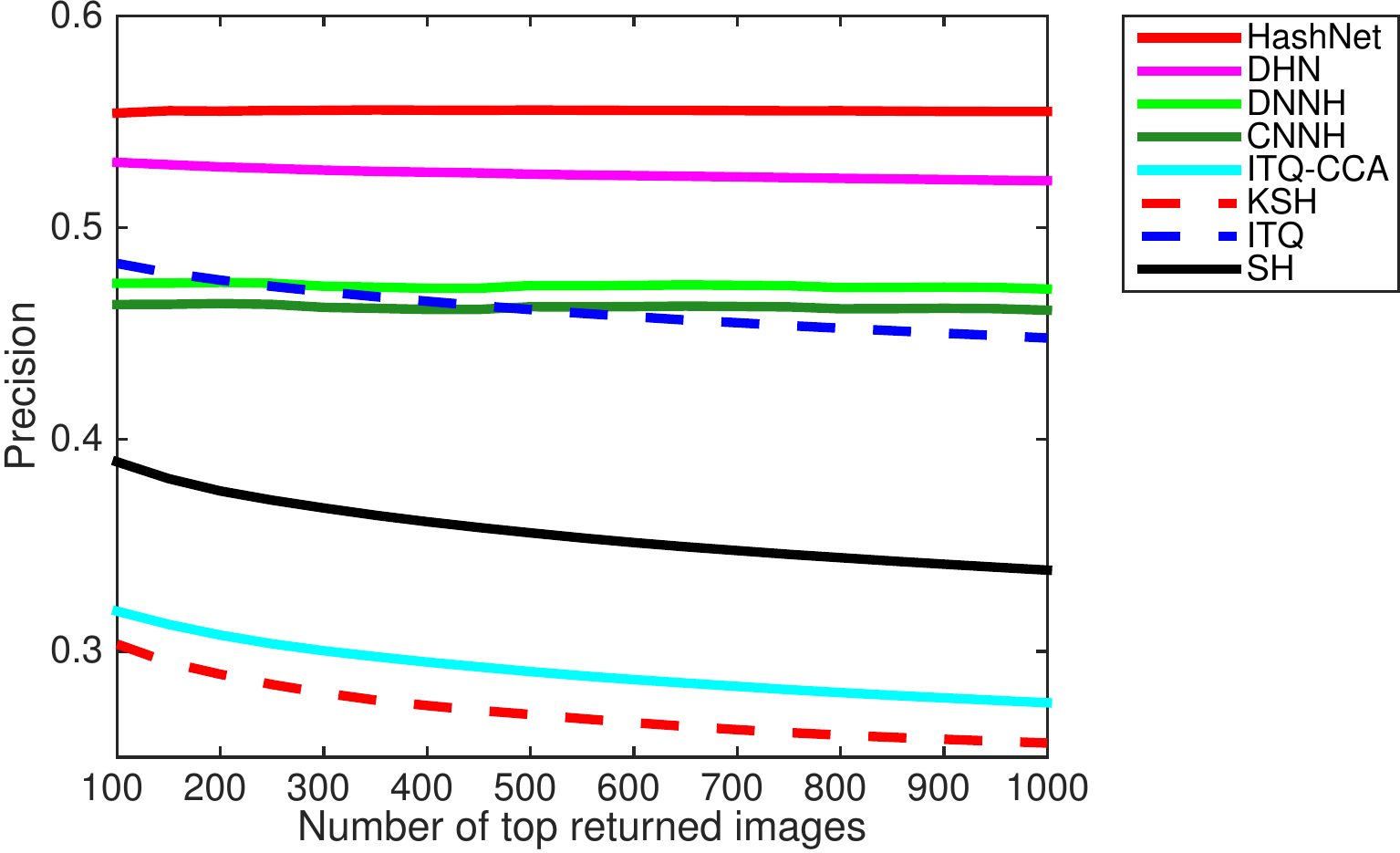}
        \label{fig:prec_nus}
    }
    \caption{The experimental results of HashNet and comparison methods on the NUS-WIDE dataset under three evaluation metrics.}
    \label{fig:nus}
\end{figure*}

\begin{figure*}[!phtb]
    \centering
    \subfigure[Precision within Hamming radius 2]{
        \includegraphics[width=0.28\textwidth]{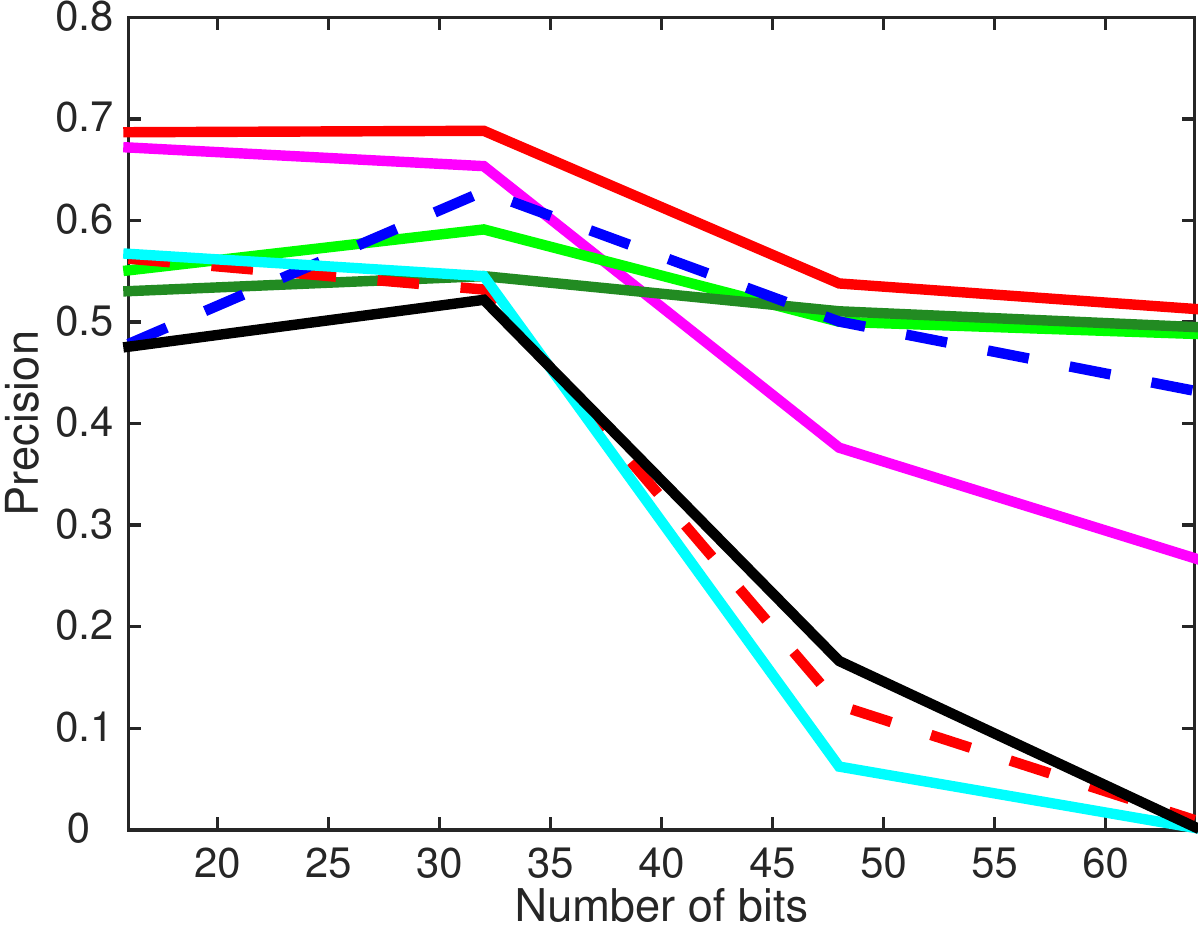}
        \label{fig:ham_coco}
    }
		\hfil
    \subfigure[Precision-recall curve @ 64 bits]{
        \includegraphics[width=0.28\textwidth]{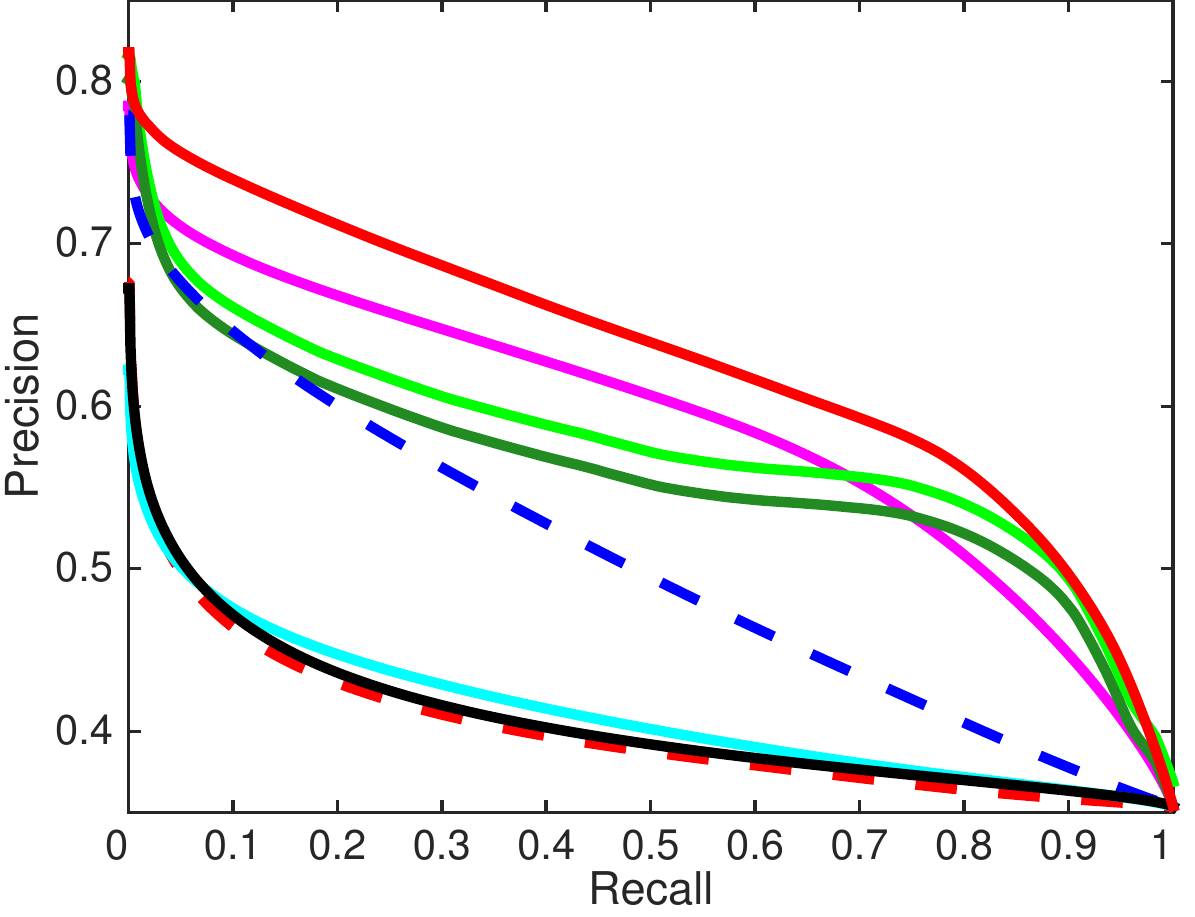}
        \label{fig:pr_coco}
    }
		\hfil
    \subfigure[Precision curve w.r.t. top-$N$ @ 64 bits]{
        \includegraphics[width=0.36\textwidth]{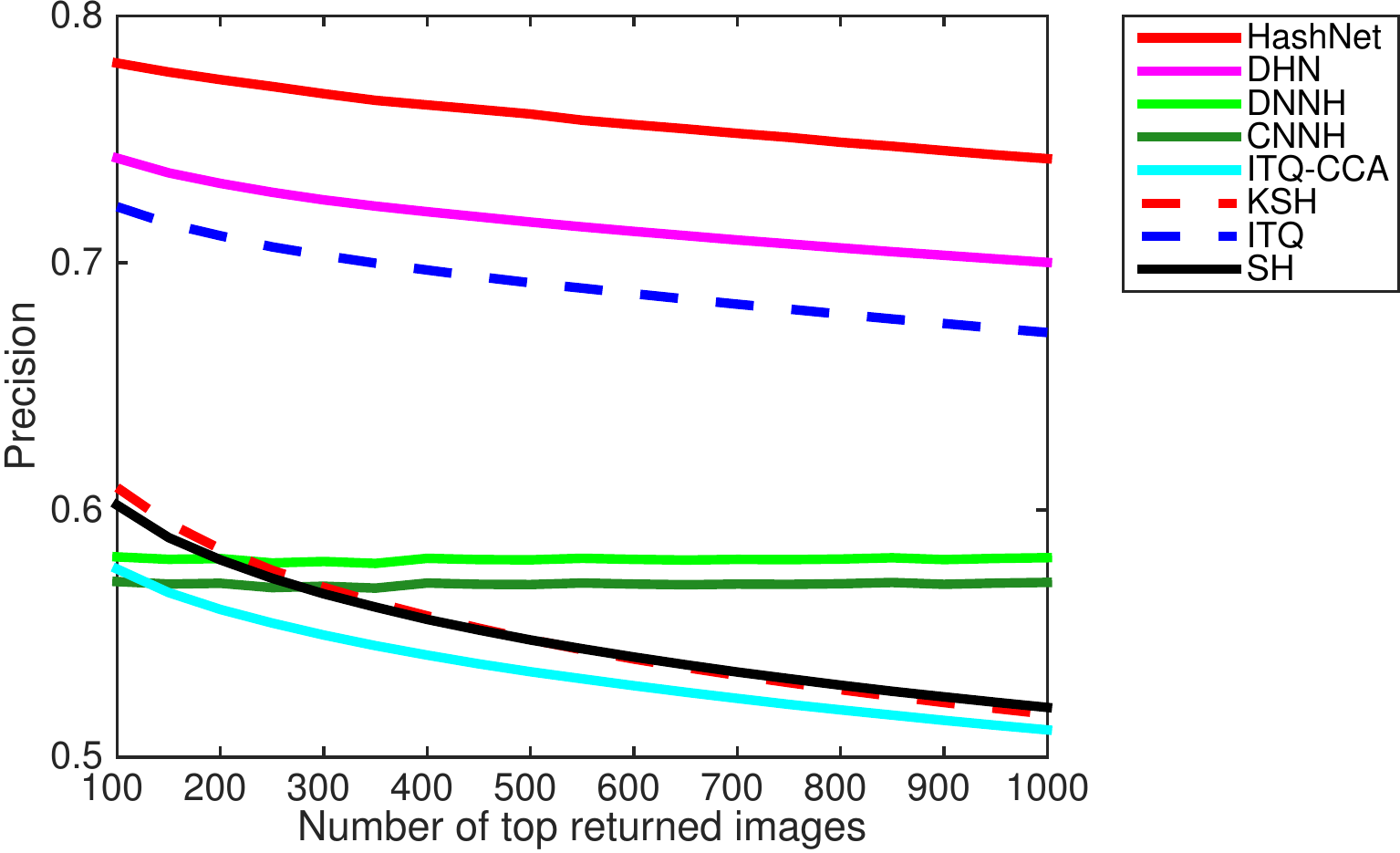}
        \label{fig:prec_coco}
    }
    \caption{The experimental results of HashNet and comparison methods on the MS COCO dataset under three evaluation metrics.}
    \label{fig:coco}
    \vspace{-10pt}
\end{figure*}

\begin{figure}[!tbp]
  \centering
  \includegraphics[width=1.0\columnwidth]{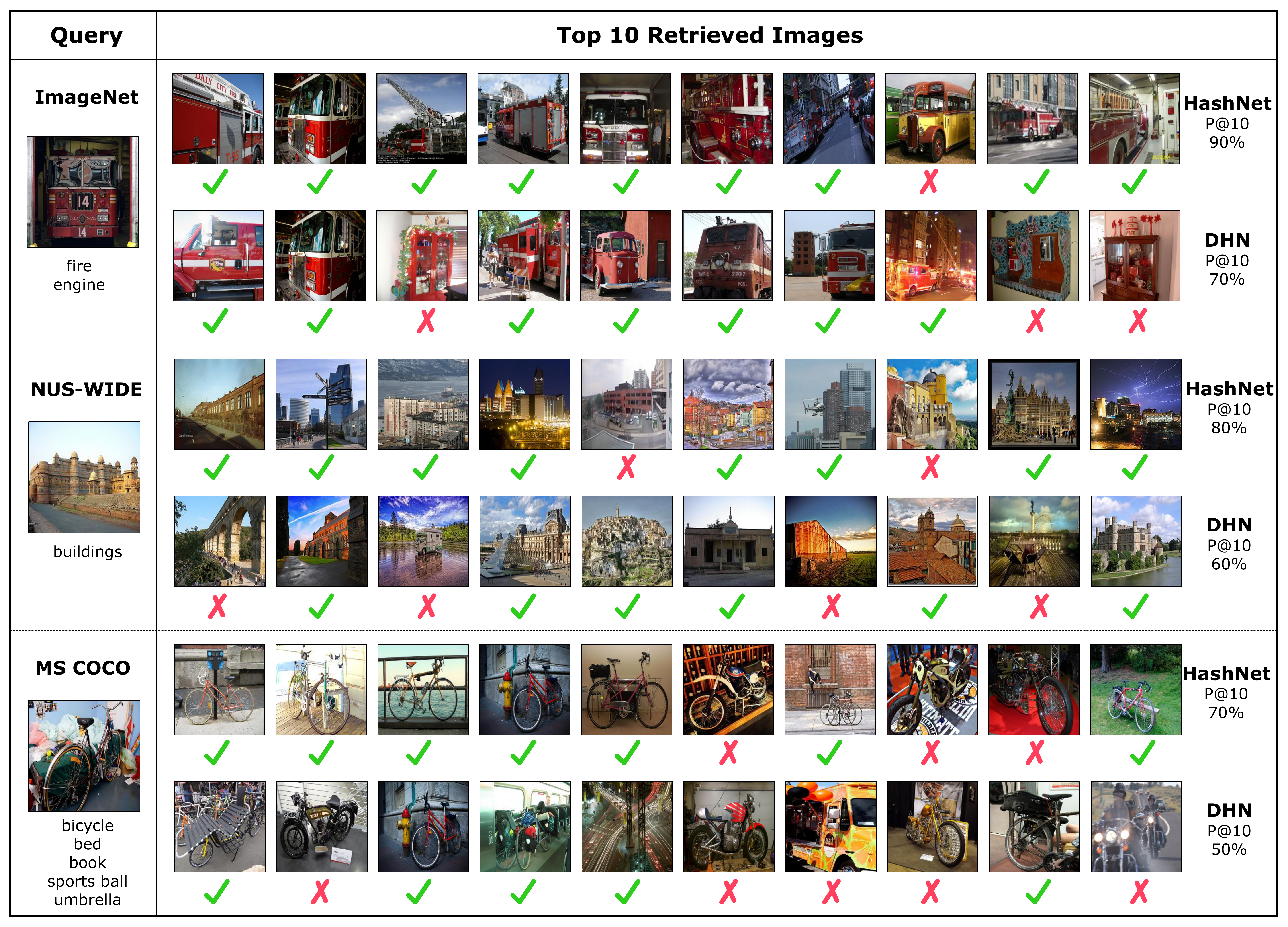}
  \caption{Examples of top 10 retrieved images and precision@10.}
   \label{fig:top10}
   \vspace{-10pt}
\end{figure}

\subsection{Results}
The Mean Average Precision (MAP) results are shown in Table \ref{table:MAP}. HashNet substantially outperforms all comparison methods. Specifically, compared to the best shallow hashing method using deep features as input, ITQ/ITQ-CCA, we achieve absolute boosts of $15.7\%$, $15.5\%$, and $9.1\%$ in average MAP for different bits on ImageNet, NUS-WIDE, and MS COCO, respectively. Compared to the state-of-the-art deep hashing method, DHN, we achieve absolute boosts of $14.6\%$, $3.7\%$, $2.9\%$ in average MAP for different bits on the three datasets, respectively. An interesting phenomenon is that the performance boost of HashNet over DHN is significantly different across the three datasets. Specifically, the performance boost on ImageNet is much larger than that on NUS-WIDE and MS COCO by about $10\%$, which is very impressive. Recall that the ratio between the number of dissimilar pairs and the number of similar pairs is roughly $100$, $5$, and $1$ for ImageNet, NUS-WIDE and MS COCO, respectively. This data imbalance problem substantially deteriorates the performance of hashing methods trained from pairwise data, including all the deep hashing methods. HashNet enhances deep learning to hash from imbalanced dataset by Weighted Maximum Likelihood (WML), which is a principled solution to tackling the data imbalance problem. This lends it the superior performance on imbalanced datasets. 
%Among all methods for evaluation, only ITQ and ITQ-CCA are hashing methods trained from pointwise data,  which is less deteriorated by the data imbalance problem. This explains their relatively better performance.

The performance in terms of Precision within Hamming radius 2 (P@H=2) is very important for efficient retrieval with binary hash codes since such Hamming ranking only requires $O(1)$ time for each query. As shown in Figures \ref{fig:ham_imagenet}, \ref{fig:ham_nus} and \ref{fig:ham_coco}, HashNet achieves the highest P@H=2 results on all three datasets. In particular, P@H=2 of HashNet with 32 bits is better than that of DHN with any bits. This validates that HashNet can learn more compact binary codes than DHN. When using longer codes, the Hamming space will become sparse and few data points fall within the Hamming ball with radius 2 \cite{cite:CVPR12MIH}. This is why most hashing methods achieve best accuracy with moderate code lengths.

The retrieval performance on the three datasets in terms of Precision-Recall curves (PR) and Precision curves with respect to different numbers of top returned samples (P@N) are shown in Figures \ref{fig:pr_imagenet}$\sim$\ref{fig:pr_coco} and Figures \ref{fig:prec_imagenet}$\sim$\ref{fig:prec_coco}, respectively. HashNet outperforms comparison methods by large margins. In particular, HashNet achieves much higher precision at lower recall levels or when the number of top results is small. This is desirable for precision-first retrieval, which is widely implemented in practical systems. As an intuitive illustration, Figure~\ref{fig:top10} shows that HashNet can yield much more relevant and user-desired retrieval results. 

Recent work \cite{cite:TIP17HSL} studies two evaluation protocols for supervised hashing: (1) supervised retrieval protocol where queries and database have identical classes and (2) zero-shot retrieval protocol where queries and database have different classes. Some supervised hashing methods perform well in one protocol but poorly in another protocol. Table~\ref{table:zeroshot} shows the MAP results on ImageNet dataset under the zero-shot retrieval protocol, where HashNet substantially outperforms DHN. Thus, HashNet works well under different protocols.

\begin{table}[!tbp]
    \centering
    \addtolength{\tabcolsep}{3pt}
    \caption{MAP on ImageNet with Zero-Shot Retrieval Protocol \cite{cite:TIP17HSL}}
    \label{table:zeroshot}
    \begin{tabular}{c|cccc}
        \Xhline{1.0pt}
        {\centering Method} & 16 bits & 32 bits  & 48 bits  & 64 bits \\
        \hline
        HashNet & \textbf{0.4411} & \textbf{0.5274} & \textbf{0.5651} & \textbf{0.5756} \\
        DHN \cite{cite:AAAI16DHN} & 0.2891 & 0.4421 & 0.5123 & 0.5342  \\
        \Xhline{1.0pt}
    \end{tabular}
    \vspace{-10pt}
\end{table}

\begin{figure}[!tbp]
    \centering
    \subfigure[HashNet]{
        \includegraphics[width=0.35\columnwidth]{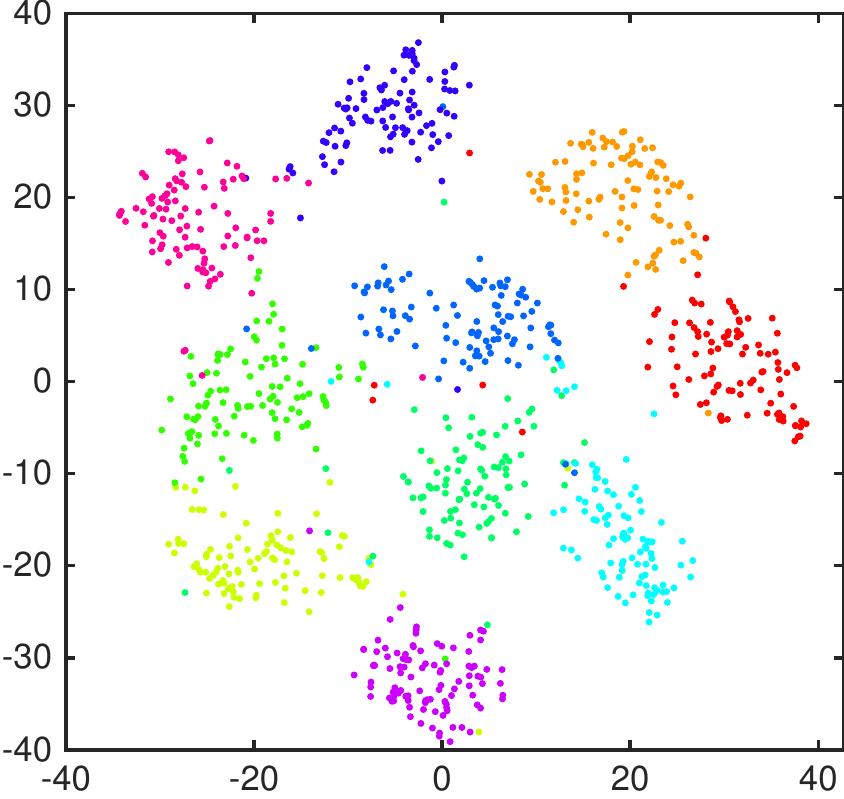}
        \label{fig:t-sne_hashnet}
    }
    \hspace{10pt}
    \subfigure[DHN]{
        \includegraphics[width=0.36\columnwidth]{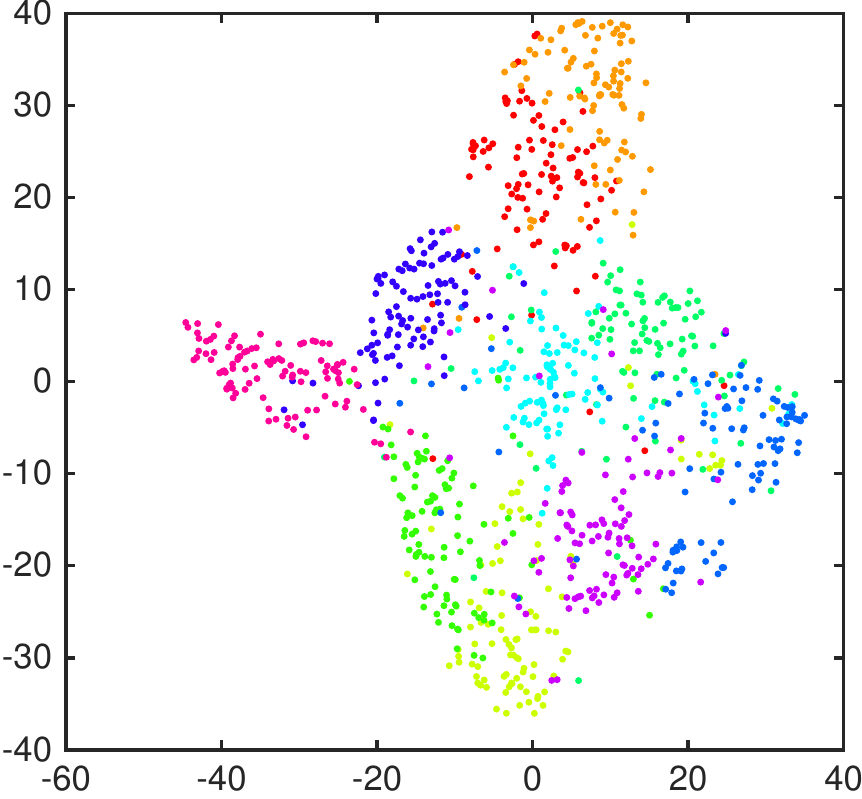}
        \label{fig:t-sne_dhn}
    }
    \caption{The t-SNE of hash codes learned by HashNet and DHN.}
    \label{fig:t-sne}
    \vspace{-10pt}
\end{figure}

\begin{table*}[htb]
    \centering
    \addtolength{\tabcolsep}{-1.7pt}
    \caption{Mean Average Precision (MAP) Results of HashNet and Its Variants, HashNet+C, HashNet-W, and HashNet-sgn on Three Datasets}
    \label{table:ablation}
    \begin{tabular}{c|cccc|cccc|cccc}
        \Xhline{1.0pt}
        \multirow{2}{30pt}{\centering Method} & \multicolumn{4}{c|}{ImageNet} & \multicolumn{4}{c|}{NUS-WIDE} & \multicolumn{4}{c}{MS COCO} \\
        \cline{2-13}
        & 16 bits & 32 bits  & 48 bits  & 64 bits & 16 bits & 32 bits  & 48 bits  & 64 bits & 16 bits & 32 bits  & 48 bits  & 64 bits \\
        \hline
        HashNet+C & \textbf{0.5059} & \textbf{0.6306} & \textbf{0.6633} & \textbf{0.6835} & \textbf{0.6646} & \textbf{0.7024} & \textbf{0.7209} & \textbf{0.7259} & \textbf{0.6876} & \textbf{0.7261} & \textbf{0.7371} & \textbf{0.7419} \\
        HashNet & \textbf{0.5059} & \textbf{0.6306} & \textbf{0.6633} & \textbf{0.6835} & \underline{0.6623} & \underline{0.6988} & \underline{0.7114} & \underline{0.7163} & \underline{0.6873} & \underline{0.7184} & \underline{0.7301} & \underline{0.7362} \\
        HashNet-W & 0.3350 & 0.4852 & 0.5668 & 0.5992 & 0.6400 & 0.6638 & 0.6788 & 0.6933 & 0.6853 & 0.7174 & 0.7297 & 0.7348  \\
        HashNet-sgn & \underline{0.4249} & \underline{0.5450} & \underline{0.5828} & \underline{0.6061} & 0.6603 & 0.6770 & 0.6921 & 0.7020 & 0.6449 & 0.6891 & 0.7056 & 0.7138 \\
        \Xhline{1.0pt}
    \end{tabular}
    \normalsize
    \vspace{-10pt}
\end{table*}

\subsection{Empirical Analysis}
\textbf{Visualization of Hash Codes:}
We visualize the t-SNE \cite{cite:ICML14DeCAF} of hash codes generated by HashNet and DHN on ImageNet in Figure~\ref{fig:t-sne} (for ease of visualization, we sample 10 categories). We observe that the hash codes generated by HashNet show clear discriminative structures in that different categories are well separated, while the hash codes generated by DHN do not show such discriminative structures. This suggests that HashNet can learn more discriminative hash codes than DHN for more effective similarity retrieval.

\textbf{Ablation Study:}
We go deeper with the efficacy of the weighted maximum likelihood and continuation methods. We investigate three variants of HashNet: (1) \textbf{HashNet+C}, variant using continuous similarity ${c_{ij}} = \frac{{{{\bm{y}}_i} \cap {{\bm{y}}_j}}}{{{{\bm{y}}_i} \cup {{\bm{y}}_j}}}$ when image labels are given; (2) \textbf{HashNet-W}, variant using maximum likelihood instead of weighted maximum likelihood, i.e. $w_{ij} = 1$; (3) \textbf{HashNet-sgn}, variant using $\operatorname{tanh}()$ instead of $\operatorname{sgn}()$ as activation function to generate continuous codes and requiring a separated binarization step to generate hash codes. We compare results of these variants in Table~\ref{table:ablation}.

By weighted maximum likelihood estimation, HashNet outperforms HashNet-W by substantially large margins of $12.4\%$, $2.8\%$ and $0.1\%$ in average MAP for different bits on ImageNet, NUS-WIDE and MS COCO, respectively. The standard maximum likelihood estimation has been widely adopted in previous work \cite{cite:AAAI14CNNH,cite:AAAI16DHN}. However, this estimation does not account for the data imbalance, and may suffer from performance drop when training data is highly imbalanced (e.g. ImageNet). In contrast, the proposed weighted maximum likelihood estimation \eqref{eqn:WML} is a principled solution to tackling the data imbalance problem by weighting the training pairs according to the importance of misclassifying that pair. Recall that MS COCO is a balanced dataset, hence HashNet and HashNet-W may yield similar MAP results. By further considering continuous similarity (${c_{ij}} = \frac{{{{\bm{y}}_i} \cap {{\bm{y}}_j}}}{{{{\bm{y}}_i} \cup {{\bm{y}}_j}}}$), HashNet+C achieves even better accuracy than HashNet.

By training HashNet with continuation, HashNet outperforms HashNet-sgn by substantial margins of $8.1\%$, $1.4\%$ and $3.0\%$ in average MAP on ImageNet, NUS-WIDE, and MS COCO, respectively. Due to the ill-posed gradient problem, existing deep hashing methods cannot learn exactly binary hash codes using $\operatorname{sgn}()$ as activation function. Instead, they need to use surrogate functions of $\operatorname{sgn}()$, e.g. $\operatorname{tanh}()$, as the activation function and learn continuous codes, which require a separated binarization step to generate hash codes. 
%Fundamentally, by continuous relaxation, existing methods solve an optimization problem which may deviate significantly from the original hashing problem, resulting in suboptimal binary codes. 
The proposed continuation method is a principled solution to deep learning to hash with $\operatorname{sgn}()$ as activation function, which learn lossless binary hash codes for accurate retrieval.

\begin{figure}[!tbp]
  \centering
  \subfigure[ImageNet]{
    \includegraphics[width=0.3\columnwidth]{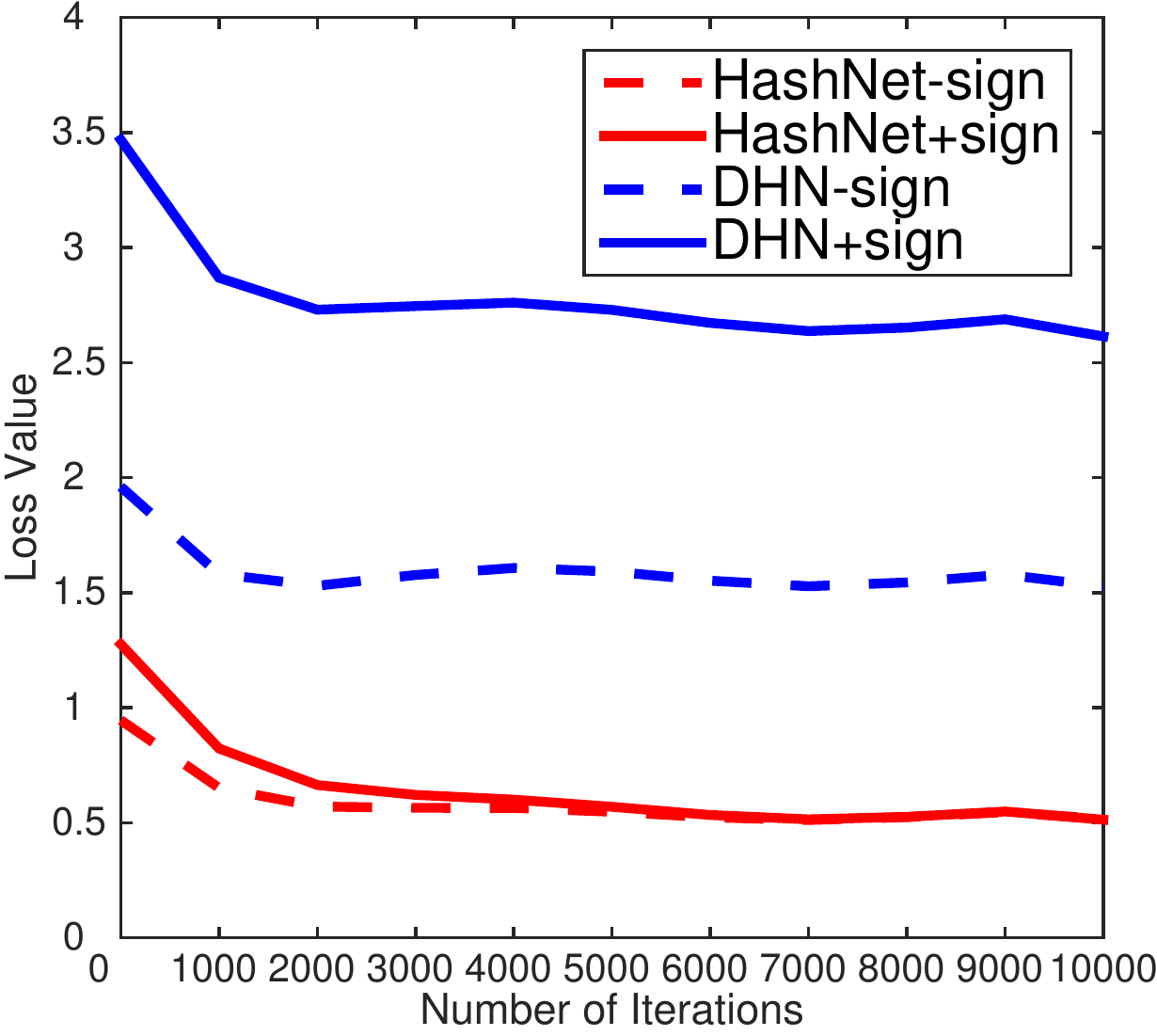}
    \label{fig:loss_imagenet}
  }
  \subfigure[NUS-WIDE]{
    \includegraphics[width=0.3\columnwidth]{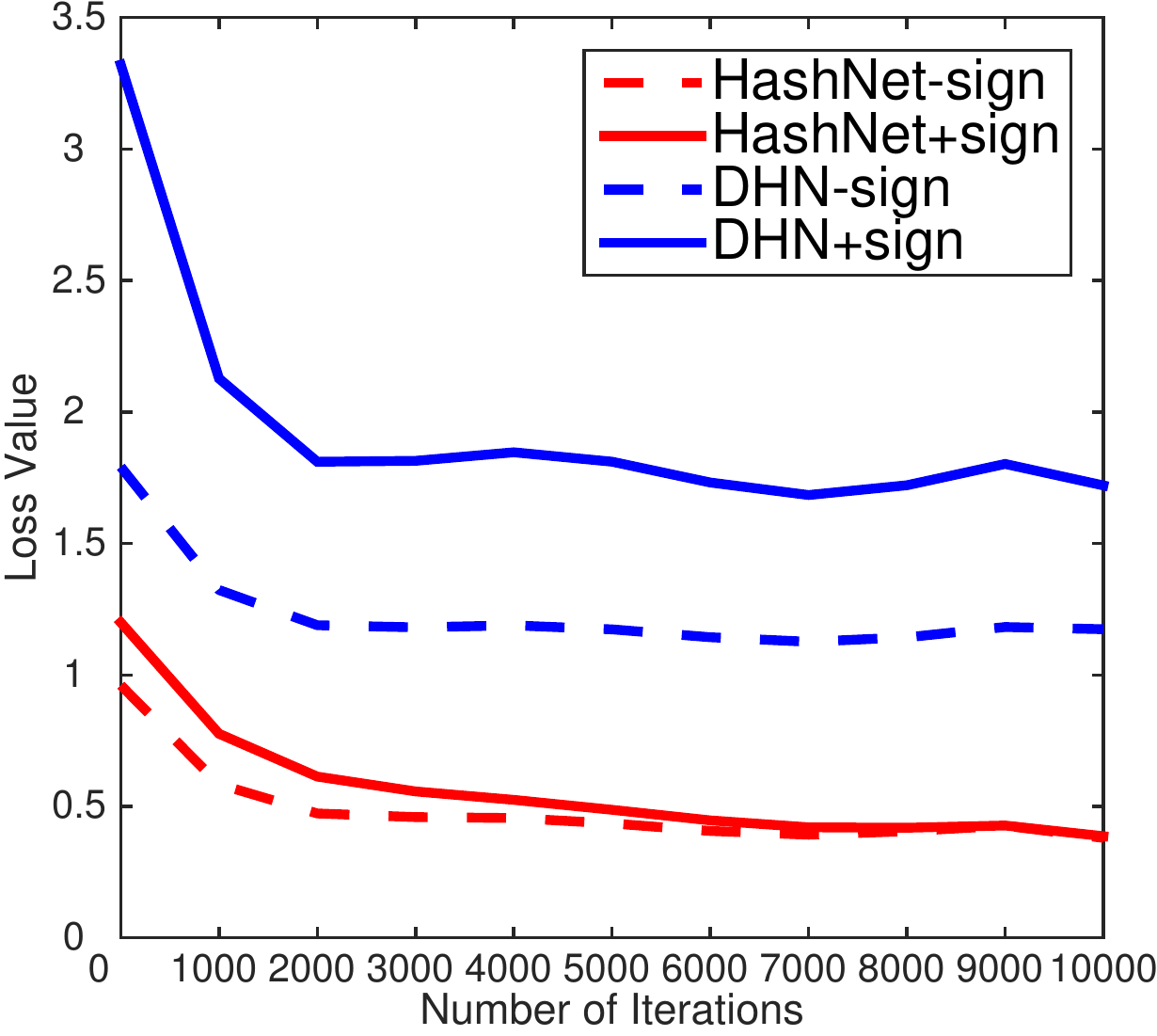}
    \label{fig:loss_nus}
  }
  \subfigure[COCO]{
    \includegraphics[width=0.3\columnwidth]{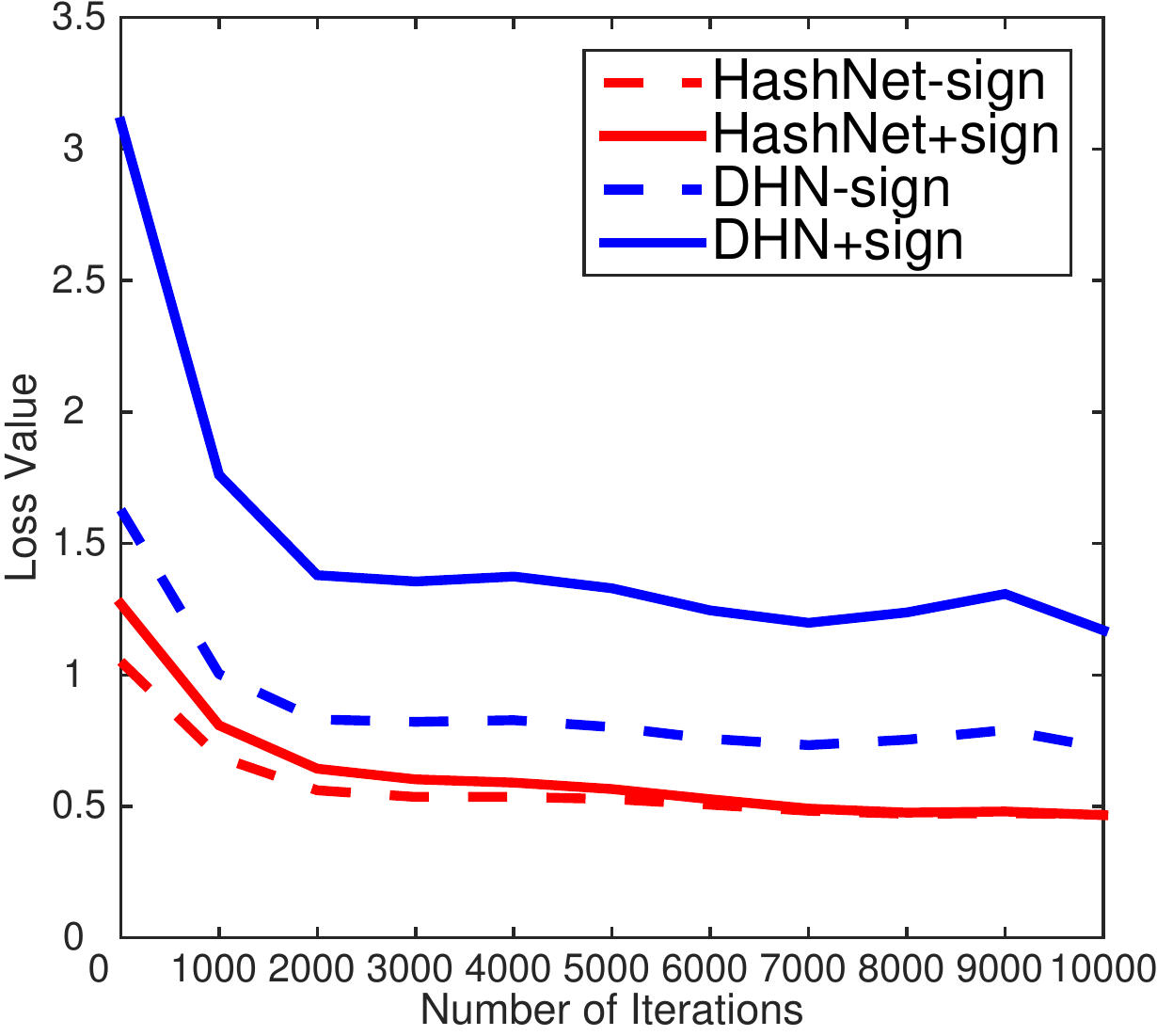}
    \label{fig:loss_coco}
  }
  \caption{Losses of HashNet and DHN through training process.}
  \label{fig:loss_value}
  \vspace{-10pt}
\end{figure}

\textbf{Loss Value Through Training Process:} We compare the change of loss values of HashNet and DHN through the training process on ImageNet, NUS-WIDE and MSCOCO. We display the loss values before (-sign) and after (+sign) binarization, i.e. $J({\bm g})$ and $L({\bm h})$. Figure~\ref{fig:loss_value} reveals three important observations: \textbf{(a)} Both methods converge in terms of the loss values before and after binarization, which validates the convergence analysis in Section~\ref{sec:convergence}. \textbf{(b)} HashNet converges with a much smaller training loss than DHN both before and after binarization, which implies that HashNet can preserve the similarity relationship in \emph{Hamming} space much better than DHN. \textbf{(c)} The two loss curves of HashNet before and after binarization become close to each other and overlap completely when convergence. This shows that the continuation method enables HashNet to approach the true loss defined on the exactly binary codes without continuous relaxation. But there is a large gap between two loss curves of DHN, implying that DHN and similar methods \cite{cite:CVPR15SDH,cite:IJCAI16DPSH,cite:CVPR2016DSH} cannot learn exactly binary codes by minimizing quantization error of codes before and after binarization.

\begin{figure}[!tbp]
  \centering
  \subfigure[ImageNet]{
    \includegraphics[width=0.3\columnwidth]{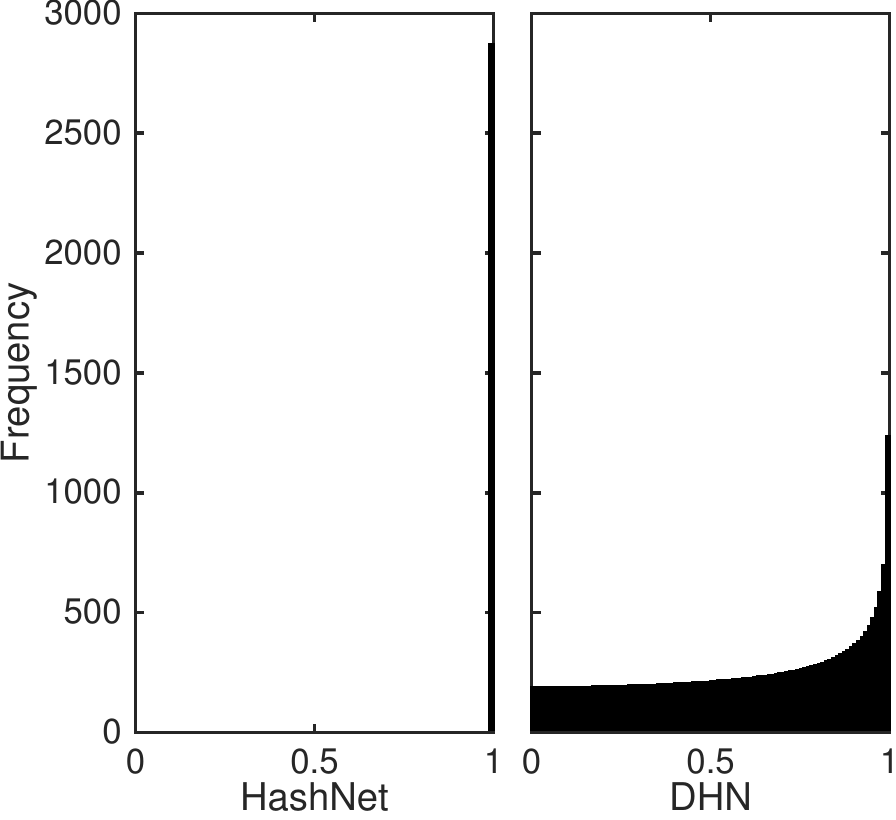}
    \label{fig:hist_imagenet}
  }
  \subfigure[NUS-WIDE]{
    \includegraphics[width=0.3\columnwidth]{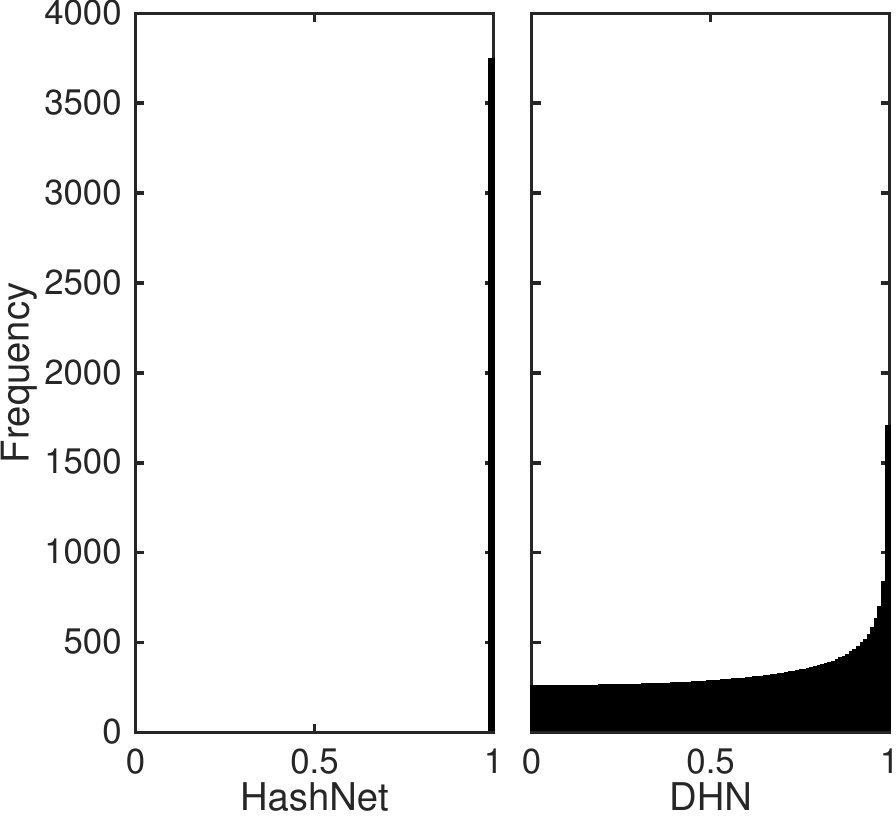}
    \label{fig:hist_nus}
  }
  \subfigure[COCO]{
    \includegraphics[width=0.3\columnwidth]{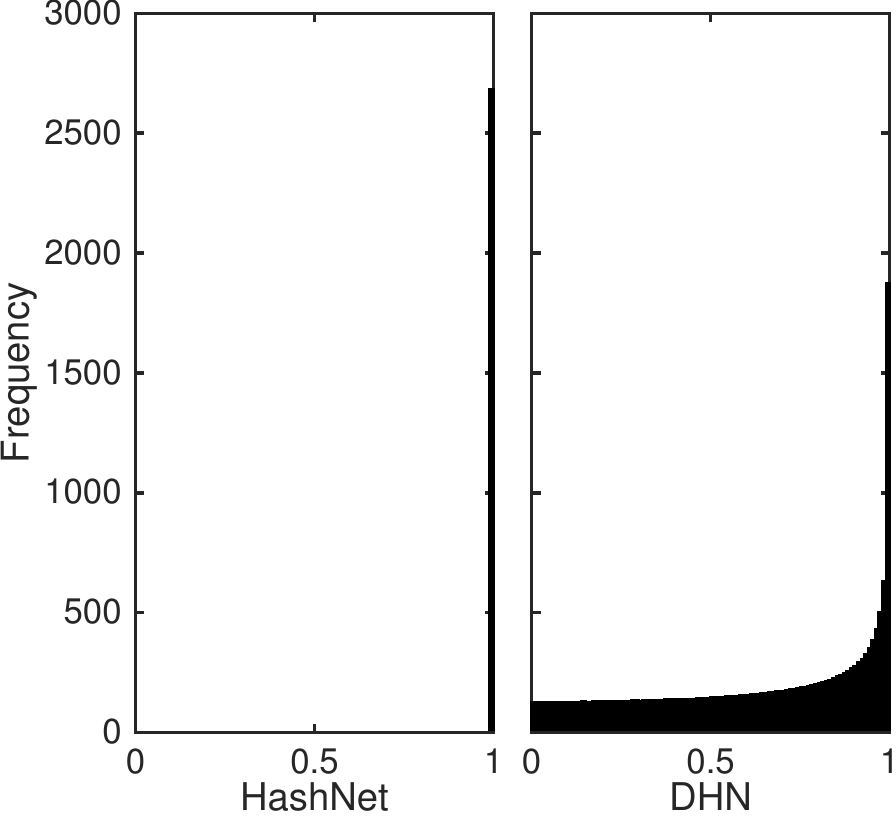}
    \label{fig:hist_coco}
  }
  \caption{Histogram of non-binarized codes of HashNet and DHN.}
  \label{fig:hist}
  \vspace{-10pt}
\end{figure}

\textbf{Histogram of Codes Without Binarization:}
As discussed previously, the proposed HashNet can learn exactly binary hash codes while previous deep hashing methods can only learn continuous codes and generate binary hash codes by post-step sign thresholding. To verify this key property, we plot the histograms of codes learned by HashNet and DHN on the three datasets without post-step binarization. The histograms can be plotted by evenly dividing $[0,1]$ into 100 bins, and calculating the frequency of codes falling into each bin. To make the histograms more readable, we show absolute code values ($x$-axis) and squared root of frequency ($y$-axis). Histograms in Figure~\ref{fig:hist} show that DHN can only generate continuous codes spanning across the whole range of $[0,1]$. This implies that if we quantize these continuous codes into binary hash codes (taking values in $\{-1,1\}$) in a post-step, we may suffer from large quantization error especially for the codes near zero. On the contrary, the codes of HashNet without binarization are already exactly binary.

\section{Conclusion}
This paper addressed deep learning to hash from imbalanced similarity data by the continuation method. The proposed HashNet can learn exactly binary hash codes by optimizing a novel weighted pairwise cross-entropy loss function in deep convolutional neural networks. HashNet can be effectively trained by the proposed multi-stage pre-training algorithm carefully crafted from the continuation method. Comprehensive empirical evidence shows that HashNet can generate exactly binary hash codes and yield state-of-the-art multimedia retrieval performance on standard benchmarks.

\section{Acknowledgments}
This work was supported by the National Key R\&D Program of China (No. 2016YFB1000701), the National Natural Science Foundation of China (No. 61502265, 61325008, and 71690231), the National Sci.\&Tech. Supporting Program (2015BAF32B01), and the Tsinghua TNList Projects.

\section{Supplemental Material:\\HashNet: Deep Learning to Hash by Continuation}

\subsection{Convergence Analysis}
We briefly analyze that the continuation optimization in Algorithm~\ref{algorithm:HashNet} will decrease the loss of HashNet~(4) in each stage and in each iteration until converging to HashNet with sign activation function that generates \emph{exactly} binary codes.

Let $L_{ij} = {{w_{ij}}\left( {\log \left( {1 + \exp \left( {\alpha \left\langle {{{\bm{h}}_i},{{\bm{h}}_j}} \right\rangle } \right)} \right) - \alpha {s_{ij}}\left\langle {{{\bm{h}}_i},{{\bm{h}}_j}} \right\rangle } \right)}$ and $L = \sum\nolimits_{{s_{ij}} \in \mathcal{S}} L_{ij}$, where ${\bm h}_i \in \{-1,+1\}^K$ are \emph{binary} hash codes. 
Note that when optimizing HashNet by continuation in Algorithm~\ref{algorithm:HashNet}, network activation in each stage $t$ is $g = \tanh(\beta_t z)$, which is \emph{continuous} in nature and will only become \emph{binary} when convergence ${\beta _t} \to \infty $. Denote by $J_{ij} = {{w_{ij}}\left( {\log \left( {1 + \exp \left( {\alpha \left\langle {{{\bm{g}}_i},{{\bm{g}}_j}} \right\rangle } \right)} \right) - \alpha {s_{ij}}\left\langle {{{\bm{g}}_i},{{\bm{g}}_j}} \right\rangle } \right)}$ and $J = \sum\nolimits_{{s_{ij}} \in \mathcal{S}} J_{ij}$ the true loss we optimize in Algorithm~1, where ${\bm g}_i \in \mathbb{R}^K$ and note that ${\bm h}_i = \operatorname{sgn}({\bm g}_i)$. We will show that HashNet loss $L({\bm h})$ descends when minimizing $J({\bm g})$.

\begin{theorem}\label{the:stage}
The HashNet loss $L$ will not change across stages $t$ and $t+1$ with bandwidths switched from $\beta_t$ to $\beta_{t+1}$.
\end{theorem}
\begin{proof}
When the algorithm switches from stages $t$ to $t+1$ with bandwidths changed from $\beta_t$ to $\beta_{t+1}$, only the network activation is changed from $\operatorname{tanh}(\beta_t z)$  to $\operatorname{tanh}(\beta_{t+1} z)$ but its sign $h = \operatorname{sgn}(\operatorname{tanh}(\beta_t z)) =  \operatorname{sgn}(\operatorname{tanh}(\beta_{t+1} z))$, i.e. the hash code, remains the same. Thus $L$ is unchanged.
\end{proof}

For each pair of binary codes $\bm{h}_i$, $\bm{h}_j$ and their continuous counterparts $\bm{g}_i$, $\bm{g}_j$, the derivative of $J$ w.r.t. each bit $k$ is
\begin{equation}\label{eqn:gradJ}
\frac{{\partial J}}{{\partial {g_{ik}}}} = w_{ij} \alpha \left( {\frac{1}{{1 + \exp \left( {-\alpha  \left\langle {{\bm{g}}_i, {\bm{g}}_j} \right\rangle } \right)}} - {s_{ij}}} \right){g_{jk}},
\end{equation}
where $k=1,\ldots,K$. The derivative of $J$ w.r.t. $\bm{g}_{j}$ can be defined similarly. Updating ${\bm g}_i$ by SGD, the updated ${{{\bm g}'_i}}$ is
\begin{equation}
\begin{aligned}
{g}'_{ik} & = {{g}}_{ik} - \eta \frac{{\partial J}}{{\partial {{{g}}_{ik}}}} \\
  & = {{g}}_{ik} - \eta w_{ij} \alpha \left( {\frac{1}{{1 + \exp \left( {-\alpha  \left\langle {{\bm{g}}_i, {\bm{g}}_j} \right\rangle } \right)}} - {s_{ij}}} \right){g_{jk}},
\end{aligned}
\end{equation}
where $\eta$ is the learning rate and $\bm{g}'_j$ is computed similarly. 

\begin{lemma}\label{lemma:con}
Denote by ${\bm h}_i = \operatorname{sgn}({\bm g}_i)$, ${\bm h}'_i = \operatorname{sgn}({\bm g}'_i)$, then
\begin{equation}
  \begin{cases}
\left\langle {\bm{h}'_i, \bm{h}'_j} \right\rangle \geqslant \left\langle {{\bm{h}}_i, {\bm{h}}_j} \right\rangle, \quad {s_{ij}} = 1,\\ 
\left\langle {\bm{h}'_i, \bm{h}'_j} \right\rangle \leqslant \left\langle {{\bm{h}}_i, {\bm{h}}_j} \right\rangle, \quad {s_{ij}} = 0.\\
  \end{cases}
  \end{equation}
\end{lemma}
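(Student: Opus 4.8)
The plan is to analyze the effect of a single SGD step coordinate by coordinate on the sign pattern, exploiting the fact that the update of $g_{ik}$ is a shift by a fixed multiple of $g_{jk}$ and vice versa. First I would substitute the two update equations into a common form. Writing $\sigma = 1/(1+\exp(-\alpha\langle\bm{g}_i,\bm{g}_j\rangle))\in(0,1)$, the update reads $g'_{ik}=g_{ik}-\eta w_{ij}\alpha(\sigma-s_{ij})g_{jk}$ and symmetrically $g'_{jk}=g_{jk}-\eta w_{ij}\alpha(\sigma-s_{ij})g_{ik}$. Since $\eta,w_{ij},\alpha>0$, the sign of the common factor $-\eta w_{ij}\alpha(\sigma-s_{ij})$ is determined entirely by $s_{ij}$: for $s_{ij}=1$ it is positive (call it $c>0$), giving the \emph{attractive} update $g'_{ik}=g_{ik}+c\,g_{jk}$, $g'_{jk}=g_{jk}+c\,g_{ik}$; for $s_{ij}=0$ it is negative (call it $-c'$ with $c'>0$), giving the \emph{repulsive} update $g'_{ik}=g_{ik}-c'\,g_{jk}$, $g'_{jk}=g_{jk}-c'\,g_{ik}$.

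Next I would observe that $\langle\bm{h}_i,\bm{h}_j\rangle=\sum_k h_{ik}h_{jk}$ decomposes into a sum over bits, where each term equals $+1$ if $g_{ik},g_{jk}$ share a sign (an \emph{agreeing} bit) and $-1$ if they have opposite signs (a \emph{disagreeing} bit). It therefore suffices to track how each bit's contribution changes, and the two cases are mirror images. For $s_{ij}=1$, the key observation is that an agreeing bit can never become disagreeing: if $g_{ik},g_{jk}$ are both positive then $g'_{ik}=g_{ik}+c\,g_{jk}>0$ and $g'_{jk}=g_{jk}+c\,g_{ik}>0$ (and symmetrically when both are negative), so agreeing bits stay agreeing and contribute $+1$ both before and after. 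A disagreeing bit contributes $-1$ before, and its new contribution lies in $\{-1,+1\}$ regardless of whether zero, one, or both coordinates flip sign, hence cannot decrease. Summing over bits gives $\langle\bm{h}'_i,\bm{h}'_j\rangle\ge\langle\bm{h}_i,\bm{h}_j\rangle$.

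The case $s_{ij}=0$ is handled by the symmetric argument with the repulsive update: now a disagreeing bit can never become agreeing, since if $g_{ik}>0>g_{jk}$ then $g'_{ik}=g_{ik}-c'g_{jk}>0$ and $g'_{jk}=g_{jk}-c'g_{ik}<0$, so disagreeing bits stay disagreeing and contribute $-1$ throughout, while an agreeing bit contributes $+1$ before and lies in $\{-1,+1\}$ after, so its contribution can only decrease. Summing yields $\langle\bm{h}'_i,\bm{h}'_j\rangle\le\langle\bm{h}_i,\bm{h}_j\rangle$.

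The main obstacle I anticipate is the bookkeeping for disagreeing bits under the attractive update (and agreeing bits under the repulsive update), where one or both coordinates may cross zero; in particular, when the common coefficient exceeds $1$ both coordinates can flip simultaneously. The point to make carefully is that this double flip only occurs in the non-clean direction and leaves the bit's contribution at $-1$ (respectively $+1$), so it never violates the claimed monotonicity: the favorable direction is precisely where the subtlety lives, and there any outcome is acceptable, whereas the direction that must be controlled (agreeing bits under attraction, disagreeing bits under repulsion) admits no sign flip at all.
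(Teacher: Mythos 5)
Your proof is correct and takes essentially the same approach as the paper's: a bitwise case analysis of the single SGD update, showing that the ``stable'' bits (agreeing bits when $s_{ij}=1$, disagreeing bits when $s_{ij}=0$) cannot flip sign while the remaining bits' contributions can only move in the favorable direction. Your packaging of the update as attractive ($g'_{ik}=g_{ik}+c\,g_{jk}$) versus repulsive ($g'_{ik}=g_{ik}-c'\,g_{jk}$) is a slightly cleaner way to organize the same four sign cases the paper enumerates, but the underlying argument is identical.
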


\begin{proof}
Since $\left\langle {{\bm{h}}_i, {\bm{h}}_j} \right\rangle  = \sum\nolimits_{k = 1}^K {{h}_{ik} {h}_{jk}}$, Lemma~\ref{lemma:con} can be proved by verifying that ${h'_{ik}}{h'_{jk}} \geqslant {h_{ik}}{h_{jk}}$ if $s_{ij}=1$ and ${h'_{ik}}{h'_{jk}} \leqslant {h_{ik}}{h_{jk}}$ if $s_{ij}=0$, $\forall k = 1,2,\ldots,K$.

\begin{remark} ${s_{ij}} = 0$.

(1) If $g_{ik} < 0$, $g_{jk} > 0$, then $\frac{{\partial J}}{{\partial {g_{ik}}}} > 0$, $\frac{{\partial J}}{{\partial {g_{jk}}}} < 0$. Thus, ${{{h}}'_{ik}} \leqslant {{h}_{ik}}=-1$, ${{{h}}'_{jk}} \geqslant {{h}_{jk}}=1$. And we have ${{{h}}'_{ik}} {{{h}}'_{jk}} = -1 = {{h}_{ik}} {{h}_{jk}}$.

(2) If $g_{ik} > 0$, $g_{jk} < 0$, then $\frac{{\partial J}}{{\partial {g_{ik}}}} < 0$, $\frac{{\partial J}}{{\partial {g_{jk}}}} > 0$. Thus, ${{{h}}'_{ik}} \geqslant {{h}_{ik}}=1$, $ {{{h}}'_{jk}} \leqslant {{h}_{jk}}=-1$. And we have ${{{h}}'_{ik}} {{{h}}'_{jk}} = -1 = {{h}_{ik}} {{h}_{jk}}$.

(3) If $g_{ik} < 0$, $g_{jk} < 0$, then $\frac{{\partial J}}{{\partial {g_{ik}}}} < 0$, $\frac{{\partial J}}{{\partial {g_{jk}}}} < 0$. Thus ${{{h}}'_{ik}} \geqslant {{h}_{ik}}=-1$, $ {{{h}}'_{jk}} \geqslant {{h}_{jk}}=-1$. So ${{{h}}'_{ik}}$ and ${{{h}}'_{jk}}$ may be either $+1$ or $-1$ and we have ${{{h}}'_{ik}} {{{h}}'_{jk}} \leqslant 1 = {{h}_{ik}} {{h}_{jk}}$.

(4) If $g_{ik} > 0$, $g_{jk} > 0$, then $\frac{{\partial J}}{{\partial {g_{ik}}}} > 0$, $\frac{{\partial J}}{{\partial {g_{jk}}}} > 0$. Thus ${{{h}}'_{ik}} \leqslant {{h}_{ik}}=1$, $ {{{h}}'_{jk}} \leqslant {{h}_{jk}}=1$. So ${{{h}}'_{ik}}$ and ${{{h}}'_{jk}}$ may be either $+1$ or $-1$ and we have ${{{h}}'_{ik}} {{{h}}'_{jk}} \leqslant 1 = {{h}_{ik}} {{h}_{jk}}$.
\end{remark}

\begin{remark} ${s_{ij}} = 1$. It can be proved similarly as Case 1.
\end{remark}
\vspace{-10pt}
\end{proof}

\begin{theorem}\label{the:converge}
Loss $L$ decreases when optimizing loss $J({\bm g})$ by the stochastic gradient descent (SGD) within each stage.
\end{theorem}

\begin{proof}
The gradient of loss $L$ w.r.t. \emph{hash} codes $\left\langle {{\bm{h}}_i, {\bm{h}}_j} \right\rangle$ is
\begin{equation}\label{eqn:dinnerproduct}
\frac{{\partial L}}{{\partial  \left\langle {{\bm{h}}_i, {\bm{h}}_j} \right\rangle }} = w_{ij} \alpha \left( {\frac{1}{{1 + \exp \left( {-\alpha  \left\langle {{\bm{h}}_i, {\bm{h}}_j} \right\rangle } \right)}} - {s_{ij}}} \right). 
\end{equation}
We observe that
\begin{equation}\label{eqn:dinnerproduct_case}
  \begin{cases}
    \frac{{\partial L}}{{\partial  \left\langle {{\bm{h}}_i, {\bm{h}}_j} \right\rangle }} \leqslant 0,\quad {s_{ij}} = 1, \\
    \frac{{\partial L}}{{\partial  \left\langle {{\bm{h}}_i, {\bm{h}}_j} \right\rangle }} \geqslant 0,\quad {s_{ij}} = 0. \\ 
  \end{cases}
\end{equation}
%Thus, loss $L$ is monotonically decreasing as ${s_{ij}} = 1$, while it is monotonic increasing as ${s_{ij}} = 0$
By substituting Lemma~\ref{lemma:con}: if ${s_{ij}} = 1$, then $\left\langle {\bm{h}'_i, \bm{h}'_j} \right\rangle \geqslant \left\langle {{\bm{h}}_i, {\bm{h}}_j} \right\rangle$, and thus $L(\bm{h}'_i, \bm{h}'_j) \leqslant L(\bm{h}_i, \bm{h}_j)$; if ${s_{ij}} = 0$, then $\left\langle {\bm{h}'_i, \bm{h}'_j} \right\rangle \leqslant \left\langle {{\bm{h}}_i, {\bm{h}}_j} \right\rangle$, and thus $L(\bm{h}'_i, \bm{h}'_j) \leqslant L(\bm{h}_i, \bm{h}_j)$.
\end{proof}

\bibliographystyle{ieee}
\bibliography{ZCao}

\end{document}